\begin{document}

\title*{Nonlinear Eigenproblems in Data Analysis - Balanced Graph Cuts and the RatioDCA-Prox}
\titlerunning{Balanced Graph Cuts and the RatioDCA-Prox}
\author{Leonardo Jost, Simon Setzer and Matthias Hein}
\institute{Department of Mathematics and Computer Science, Saarland University, Saarbr{\"u}cken, Germany\\
\email{leo@santorin.cs.uni-sb.de, simon.setzer@gmail.com, hein@math.uni-sb.de}}
%
%
\maketitle

\abstract{It has been recently shown that a large class of balanced graph cuts allows for an exact relaxation into a nonlinear eigenproblem.
We review briefly some of these results and propose a family of algorithms to compute nonlinear eigenvectors which encompasses previous work as special cases.
We provide a detailed analysis of the properties and the convergence behavior of these algorithms and then discuss their application in the area of balanced graph cuts.
}

\keywords{Clustering, Graphs, Hypergraphs, Balanced graph cuts, Differences of convex functions, Ratios of convex functions, Nonlinear eigenproblem, Nonconvex optimization, Lovasz extension}

\section{Introduction}\label{Jost_Setzer_Hein:sec:intro}
\index{clustering,graph cut!balanced}

Spectral clustering is one of the standard methods for graph-based clustering \cite{Jost_Setzer_Hein:Lux2007}. It is based on the spectral relaxation of the so called normalized
cut, which is one of the most popular criteria for balanced graph cuts. While the spectral relaxation is known to be loose \cite{Jost_Setzer_Hein:GM98}, tighter relaxations based
on the graph $p$-Laplacian have been proposed in \cite{Jost_Setzer_Hein:BueHei2009}. Exact relaxations for the Cheeger cut based on the nonlinear eigenproblem of the graph $1$-Laplacian have been proposed in \cite{Jost_Setzer_Hein:SB10, Jost_Setzer_Hein:HeiBue2010}.
In \cite{Jost_Setzer_Hein:HeiSet2011} the general balanced graph cut problem of an undirected, weighted graph $(V,E)$ is considered. Let $n=|V|$ and denote the  
weight matrix of the graph by $W=(w_{ij})_{i,j=1}^n$, then the general balanced graph cut criterion can be written as
\begin{equation*}
 \mathop{\rm arg\,min}\limits_{A\subset V}\frac{\mathrm{cut}(A,\overline A)}{\hat S(A)},
\end{equation*}
where $\overline A=V\setminus A$, $\mathrm{cut}(A,\overline{A})=\sum_{i \in A, j \in \overline{A}} w_{ij}$, and $\hat S\colon 2^V\to\mathbb{R}_+$ is a symmetric and nonnegative balancing function. Exact relaxations of such balanced graph cuts and relations to corresponding nonlinear eigenproblems are discussed in \cite{Jost_Setzer_Hein:HeiSet2011} and are briefly reviewed in Section \ref{Jost_Setzer_Hein:sec:relaxation}.  A further generalization to hypergraphs has been established in \cite{Jost_Setzer_Hein:HeiSet2013}. 

There exist different approaches to minimize the exact continuous relaxations. However, in all cases the problem boils down to the minimization of a ratio of a convex and a difference of convex functions.
The two lines of work of \cite{Jost_Setzer_Hein:BLUB2012,Jost_Setzer_Hein:BLUB12} and \cite{Jost_Setzer_Hein:HeiBue2010,Jost_Setzer_Hein:HeiSet2011} have developed different algorithms for this problem, which have been compared in \cite{Jost_Setzer_Hein:BLUB2012}. We show that both types of algorithms are special cases of our new algorithm RatioDCA-prox introduced in Section \ref{Jost_Setzer_Hein:sec:algorithm}. We provide a unified analysis of the properties and the convergence behavior of RatioDCA-prox. Moreover, in Section \ref{Jost_Setzer_Hein:sec:algforcuts} we prove stronger convergence results when
the RatioDCA-prox is applied to the balanced graph cut problem or, more generally, problems where one minimizes nonnegative ratios of Lovasz extensions of set functions. Further, we discuss the choice of the relaxation of the balancing function in \cite{Jost_Setzer_Hein:HeiSet2011} and show that from a theoretical perspective the Lovasz extension is optimal which is supported by the numerical results in Section \ref{Jost_Setzer_Hein:sec:experiments}.

\section{Exact Relaxation of Balanced Graph Cuts}\label{Jost_Setzer_Hein:sec:relaxation}
\index{graph cut!balanced,Lovasz extension}
A key element for the exact continuous relaxation of balanced graph cuts is the Lovasz extension of a function on the power set $2^V$ to $\mathbb{R}^V$.
\begin{definition}\label{Jost_Setzer_Hein:def:Lovasz}
Let $\hat{S}:2^V \rightarrow \mathbb{R}$ be a set function with $\hat{S}(\emptyset)=0$.
Let $f \in \mathbb{R}^V$, let $V$ be ordered such that $f_1\leq f_2 \leq \ldots  \leq f_n$
and define $C_i = \{ j  \in V \, | \, j > i\}$. 
Then, the Lovasz extension $S:\mathbb{R}^V \rightarrow \mathbb{R}$ of $\hat{S}$ is given by 
\begin{align*}
S(f) &=\, \sum_{i=1}^{n} f_i \Big(\hat{S}(C_{i-1}) - \hat{S}(C_i)\Big)=\, \sum_{i=1}^{n-1} \hat{S}(C_i)(f_{i+1}-f_i) + f_1 \hat{S}(V).
\end{align*}
Note that for the characteristic function of a set $C \subset V$, we have $S(\mathbf{1}_C)=\hat{S}(C)$.
\end{definition}
The Lovasz extension is convex if and only if $\hat{S}$ is submodular \cite{Jost_Setzer_Hein:Bac2013} and 
every Lovasz extension can be written as a difference of convex functions \cite{Jost_Setzer_Hein:HeiSet2011}. Moreover, the Lovasz extension of a symmetric set function
is positively one-homogeneous\footnote{A function $A\colon\mathbb{R}^n\to\mathbb{R}$ is (positively) $p$-homogeneous if $A(\nu x)=\nu^pA(x)$ for all $\nu\in\mathbb{R}$ ($\nu\ge0$).
In the following we will call functions just homogeneous when referring to positive homogeneity.} 
and preserves non-negativity, that is $S(f)\geq 0, \, \forall f \in \mathbb{R}^V$ if $\hat{S}(A)\geq 0, \, \forall A \subset V$.
It it well known, see e.g. \cite{Jost_Setzer_Hein:HeiSet2013}, that the Lovasz extension of the submodular cut function, $\hat{R}(A)=\mathrm{cut}(A,\overline{A})$, yields the total variation on a graph,
\begin{equation}\label{Jost_Setzer_Hein:eq:TV}
R(f) = \frac{1}{2}\sum_{i,j=1}^n w_{ij}|f_i-f_j|.
\end{equation}
Theorem \ref{Jost_Setzer_Hein:th:sets} shows exact continuous relaxations of balanced graph cuts \cite{Jost_Setzer_Hein:HeiSet2011}. A more general 
version for the class of constrained fractional set programs is given in \cite{Jost_Setzer_Hein:BueEtAl2013}.
\begin{theorem}\label{Jost_Setzer_Hein:th:sets}
\index{relaxation!exact}
Let $G=(V,E)$ be an undirected, weighted graph and $S:V \rightarrow \mathbb{R}$
and let $\hat{S}:2^V \rightarrow \mathbb{R}$ be symmetric  with $\hat{S}(\emptyset)=0$, then
\begin{align*} 
\mathop{\rm min}\nolimits\limits_{f \in \mathbb{R}^V} \frac{\frac{1}{2}\sum_{i,j=1}^n w_{ij}|f_i-f_j|}{S(f)} = \mathop{\rm min}\nolimits\limits_{A \subset V} \frac{\mathrm{cut}(A,\overline{A})}{\hat{S}(A)},
\end{align*}
if either one of the following two conditions holds 
\begin{enumerate}
\item $S$ is one-homogeneous, even, convex and $S(f+\alpha \mathbf{1})=S(f)$ for all $f \in \mathbb{R}^V$, $\alpha \in \mathbb{R}$ 
      and $\hat{S}$ is defined as $\hat{S}(A):=S(\mathbf{1}_A)$ for all $A \subset V$.
\item $S$ is the Lovasz extension of the non-negative, symmetric set function $\hat{S}$ with $\hat{S}(\emptyset)=0$.
\end{enumerate}
Let $f \in \mathbb{R}^V$ and denote by $C_t:=\{i \in V\,|\, f_i > t\}$, then it holds under both conditions,
\begin{align*}
 \mathop{\rm min}\nolimits\limits_{t \in \mathbb{R}} \frac{\mathrm{cut}(C_t,\overline{C_t})}{\hat{S}(C_t)} \leq \frac{\frac{1}{2}\sum_{i,j=1}^n w_{ij}|f_i-f_j|}{S(f)}.
\end{align*}
\end{theorem}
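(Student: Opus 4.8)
The plan is to establish the final displayed (``co-area'') inequality first, and then read off the equality of the two minima as an easy consequence of it. Write $R(f):=\tfrac12\sum_{i,j=1}^n w_{ij}|f_i-f_j|$ for the numerator and set $\lambda^\ast:=\min_{t\in\mathbb R}\frac{\mathrm{cut}(C_t,\overline{C_t})}{\hat S(C_t)}$; as $t$ varies, the superlevel sets $C_t=\{i:f_i>t\}$ take only finitely many values, and (assuming $S(f)>0$, which is the only nontrivial case) at least one of them has $\hat S(C_t)>0$, so $\lambda^\ast$ is a well-defined finite number with $\lambda^\ast\ge 0$. The starting identity is the co-area formula for the total variation,
\[
R(f)=\int_{-\infty}^{\infty}\mathrm{cut}(C_t,\overline{C_t})\,dt ,
\]
which follows from $|f_i-f_j|=\int_{\mathbb R}\bigl|\mathbf 1_{\{f_i>t\}}-\mathbf 1_{\{f_j>t\}}\bigr|\,dt$ after recognising $\tfrac12\sum_{i,j}w_{ij}\bigl|(\mathbf 1_{C_t})_i-(\mathbf 1_{C_t})_j\bigr|$ as $\mathrm{cut}(C_t,\overline{C_t})$; equivalently it expresses that $R$ is the Lovász extension of the cut function and that Lovász extensions of set functions vanishing on $V$ satisfy this formula, which is immediate from Definition \ref{Jost_Setzer_Hein:def:Lovasz}.

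Next I would prove the matching estimate for the denominator, namely $S(f)\le\int_{-\infty}^{\infty}\hat S(C_t)\,dt$ under both hypotheses, with equality under condition 2. Under condition 2, $S$ is itself the Lovász extension of $\hat S$ and $\hat S(V)=\hat S(\emptyset)=0$ by symmetry, so the same computation from Definition \ref{Jost_Setzer_Hein:def:Lovasz} gives $S(f)=\int_{\mathbb R}\hat S(C_t)\,dt$ exactly. Under condition 1 there is no closed form, so instead I would use the layer-cake representation $f-(\min_i f_i)\mathbf 1=\int_{\min_i f_i}^{\infty}\mathbf 1_{C_t}\,dt$ together with the translation invariance $S(f)=S\bigl(f-(\min_i f_i)\mathbf 1\bigr)$; since $t\mapsto\mathbf 1_{C_t}$ is piecewise constant the integral is a finite nonnegative combination of characteristic vectors, and a convex positively one-homogeneous function is sublinear, hence $S(f)\le\int_{\min_i f_i}^{\infty}S(\mathbf 1_{C_t})\,dt=\int_{\min_i f_i}^{\infty}\hat S(C_t)\,dt$. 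The remaining tail $t<\min_i f_i$ contributes nothing because there $C_t=V$ and $\hat S(V)=S(\mathbf 1)=S(\mathbf 0)=0$ by translation invariance and homogeneity.

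With both pieces in hand the rest is routine. For every $t$ one has $\mathrm{cut}(C_t,\overline{C_t})\ge\lambda^\ast\hat S(C_t)$ — by definition of $\lambda^\ast$ when $\hat S(C_t)>0$, and trivially when $\hat S(C_t)=0$ since the left-hand side is nonnegative — so integrating and combining the two displays yields $R(f)\ge\lambda^\ast\int_{\mathbb R}\hat S(C_t)\,dt\ge\lambda^\ast S(f)$, and dividing by $S(f)>0$ gives the claimed inequality. The equality of the minima then follows: ``$\ge$'' by choosing, for a given $f$, the optimal threshold $t$, and ``$\le$'' by plugging in the characteristic vector $f=\mathbf 1_A$, for which $R(\mathbf 1_A)=\mathrm{cut}(A,\overline A)$ and $S(\mathbf 1_A)=\hat S(A)$ (the latter by Definition \ref{Jost_Setzer_Hein:def:Lovasz} under condition 2 and by the definition $\hat S(A):=S(\mathbf 1_A)$ under condition 1), so that $\min_f \frac{R(f)}{S(f)}\le\frac{\mathrm{cut}(A,\overline A)}{\hat S(A)}$ for every $A$.

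The step I expect to be the actual obstacle is the denominator estimate under condition 1, because there is no explicit formula for $S$ there: one must argue purely from one-homogeneity, evenness, convexity and shift-invariance, convert finite subadditivity into the statement for the (finite) layer-cake sum, and dispose of the constant-shift term via translation invariance. By contrast, the co-area identity for $R$, the bookkeeping in the degenerate cases ($\hat S(C_t)=0$, $S(f)=0$), and the fact that $\min_t$ is genuinely attained — it is a minimum over the finitely many distinct superlevel sets of $f$ — are all straightforward.
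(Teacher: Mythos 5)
The paper only \emph{states} Theorem~\ref{Jost_Setzer_Hein:th:sets} as a review of results from \cite{Jost_Setzer_Hein:HeiSet2011} and contains no proof of its own, so there is no in-source argument to compare against line by line. Your proof is correct and follows essentially the standard route of that reference: the co-area identity $R(f)=\int_{\mathbb R}\mathrm{cut}(C_t,\overline{C_t})\,dt$, the denominator bound $S(f)\le\int_{\mathbb R}\hat S(C_t)\,dt$ (via sublinearity and shift-invariance under condition~1, with equality for the Lov\'asz extension under condition~2), and thresholding plus evaluation at $\mathbf 1_A$ to convert the pointwise inequality into equality of the two minima.
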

We observe that the exact continuous relaxation corresponds to a minimization problem of a ratio of non-negative, one-homogeneous functions, where the enumerator is convex and the denominator can be written as a difference of convex functions.

\section{Minimization of Ratios of Non-negative Differences of Convex Functions via the RatioDCA-prox} \label{Jost_Setzer_Hein:sec:minimization}
\index{difference of convex,minimization!nonconvex}
We consider in this paper continuous optimization problems of the form
\begin{equation}\label{Jost_Setzer_Hein:eq:minprob}
 \mathop{\rm min}\nolimits_{f \in \mathbb{R}^V} F(f), \quad \textrm{ where } \quad F(f)=\frac{R(f)}{S(f)}=\frac{R_1(f)-R_2(f)}{S_1(f)-S_2(f)},
\end{equation}
where $R_1,R_2,S_1,S_2$ are convex and one-homogeneous and $R(f)=R_1(f)-R_2(f)$ and $S(f)=S_1(f)-S_2(f)$ are non-negative. Thus we are minimizing a non-negative ratio of d.c.
(difference of convex) functions. As discussed above the exact continuous relaxation of Theorem \ref{Jost_Setzer_Hein:th:sets} leads exactly to such a problem, where $R_2(f)=0$ and 
$R_1(f)=\frac{1}{2}\sum_{i,j=1}^n w_{ij}|f_i-f_j|$. Different choices of balancing functions lead to different functions $S$.

While \cite{Jost_Setzer_Hein:HeiBue2010,Jost_Setzer_Hein:BLUB12, Jost_Setzer_Hein:BLUB2012} consider only algorithms for the minimization of ratios of convex functions, 
in \cite{Jost_Setzer_Hein:HeiSet2011} the RatioDCA has been proposed for the minimization of problems of type \eqref{Jost_Setzer_Hein:eq:minprob}. The generalized version RatioDCA-prox is a 
family of algorithms which contains the work of \cite{Jost_Setzer_Hein:HeiBue2010,Jost_Setzer_Hein:HeiSet2011,Jost_Setzer_Hein:BLUB12, Jost_Setzer_Hein:BLUB2012} as special cases and allows us 
to treat the minimization problem \eqref{Jost_Setzer_Hein:eq:minprob} in a unified manner.

\subsection{The RatioDCA-prox algorithm}
\label{Jost_Setzer_Hein:sec:algorithm}
\index{RatioDCA-prox,convex,clarke subdifferential}
The RatioDCA-prox algorithm for minimization of \eqref{Jost_Setzer_Hein:eq:minprob} is given in Algorithm \ref{Jost_Setzer_Hein:alg:RDP}. In each step one has to solve the convex optimization problem
\begin{equation}\label{Jost_Setzer_Hein:eq:innerprob}
 \mathop{\rm min}\nolimits_{G(u)\le1} \Phi^{c^k}_{f^k}(u),
\end{equation}
which we denote as the \emph{inner problem} in the following with
\begin{equation*}
\Phi_{f^k}^{c^k}(u):=R_1(u)-\left\langle{u,r_2(f^k)}\right\rangle+\lambda^k\Big(S_2(u)-\left\langle{u,s_1(f^k)}\right\rangle\Big)-c^k\left\langle{u,g(f^k)}\right\rangle
\end{equation*}
and $c^k\ge0$. As the constraint set we can choose any set containing a neighborhood of $0$, such that the inner problem is bounded from below,
i.e. any nonnegative convex $p$-homogeneous $(p\ge1)$ function $G$. Although a slightly more general formulation is possible,
we choose the constraint set to be compact, i.e. $G(f)=0\Leftrightarrow f=0$. Moreover, $s_1(f^k) \in \partial S_1(f^k)$, $r_2(f^k) \in \partial R_2(f^k)$, $g(f^k)\in\partial G(f^k)$, where $\partial S_1,\partial R_2,\partial G$ are the subdifferentials.
Note that for any $p$-homogeneous function $A$ we have the generalized Euler identity \cite[Theorem 2.1]{Jost_Setzer_Hein:YanWei08} that is 
$\left\langle{f,a(f)}\right\rangle=p\,A(f)$ for all $a(f)\in\partial A(f)$.

Clearly $\Phi_{f^k}^{c^k}$ is also one-homogeneous and with the Euler identity we get $\Phi_{f^k}^{c^k}(f^k)=-c^kpG(f^k)\le0$ so we can always find minimizers at the boundary.

\begin{algorithm}[!h]
   \caption{\label{Jost_Setzer_Hein:alg:RDP}{\bf RatioDCA-prox} -- Minimization of a ratio of non-negative, one-homogeneous d.c. functions}
\begin{algorithmic}[1]
   \State {\bfseries Initialization:} $f^0 = \text{random}$ with $G(f^0) = 1$, $\lambda^0 =F(f^0)$
   \Repeat 
   \State \text{find} $s_1(f^k) \in \partial S_1(f^k)$, $r_2(f^k) \in \partial R_2(f^k)$, $g(f^k)\in\partial G(f^k)$
   \State \text{find} $f^{k+1} \in \mathop{\rm arg\,min}\limits_{G(u) \leq 1} \Phi_{f^k}^{c^k}(u)$ 
   \State $\lambda^{k+1}= F(f^{k+1})$
	\Until $f^{k+1} \in \mathop{\rm arg\,min}\limits_{G(u) \leq 1} \Phi_{f^{k+1}}^{c^{k+1}}(u)$
\end{algorithmic}
\end{algorithm}

The difference to the RatioDCA in \cite{Jost_Setzer_Hein:HeiSet2011} is the additional proximal term $-c^k\left\langle u,g(f^k)\right\rangle$ in $\Phi^{c^k}_{f^k}(u)$ and the choice of $G$. 
It is interesting to note that this term can be derived by applying the RatioDCA to a different d.c. decomposition of $F$. Let us write $F$ as
\begin{equation}\label{Jost_Setzer_Hein:eq:dc-F}
F=\frac{R'_1-R'_2}{S'_1-S'_2}=\frac{(R_1+c_RG)-(R_2+c_RG)}{(S_1+c_SG)-(S_2+c_SG)}
\end{equation}
with arbitrary $c_R,c_S\ge0$. If we now define $c^k:=c_R+\lambda^k c_S$, 
the function to be minimized in the inner problem of the RatioDCA reads
\begin{equation*}
\begin{aligned}
\Phi'_{f^k}(u)=R'_1(u)-\left\langle u,r'_2(f^k)\right\rangle+\lambda^k\left(S'_2(u)-\left\langle u,s'_1(f^k)\right\rangle\right)
=\Phi^{c_k}_{f^k}(u)+c^kG(u),
\end{aligned}
\end{equation*}
which is not necessarily one-homogeneous anymore. The following lemma implies that
the minimizers of the inner problem of RatioDCA-prox and of RatioDCA applied to the d.c.-decomposition \eqref{Jost_Setzer_Hein:eq:dc-F} can be 
chosen to be the same.

\begin{lemma}
\label{Jost_Setzer_Hein:le:decomposition}
For $G(f^k)=1$ we have $\mathop{\rm arg\,min}\limits_{G(u)\le1}\Phi_{f^k}'(u)\supseteq\mathop{\rm arg\,min}\limits_{G(u)\le 1}\Phi^{c^k}_{f^k}(u)$.
Moreover, 
\begin{enumerate}
 \item if $p>1,c^k>0$ then $\mathop{\rm arg\,min}\limits_{u}\Phi_{f^k}'(u)\supseteq\nu\cdot\mathop{\rm arg\,min}\limits_{G(u)\le 1}\Phi^{c^k}_{f^k}(u)$ for some $\nu\ge1$,
 \item if $f^k\in\mathop{\rm arg\,min}\limits_{G(u)\le 1}\Phi^{c^k}_{f^k}(u)$ then $\mathop{\rm arg\,min}\limits_{u}\Phi_{f^k}'(u)\supseteq\mathop{\rm arg\,min}\limits_{G(u)\le 1}\Phi^{c^k}_{f^k}(u)$.
\end{enumerate}

\end{lemma}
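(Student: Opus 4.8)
The plan is to reduce every statement to one-dimensional comparisons along rays, exploiting that $\Phi:=\Phi^{c^k}_{f^k}$ is one-homogeneous, that $G$ is convex, nonnegative, $p$-homogeneous with $G(u)=0\Leftrightarrow u=0$ (so $\{G\le1\}$ is compact), and that $\Phi'_{f^k}=\Phi+c^kG$. The one quantitative input I will use is already recorded in the text: $\Phi(f^k)=-c^kpG(f^k)=-c^kp$ (Euler identity, $G(f^k)=1$). I would first extract from this the key inequality $m:=\min_{G(u)\le1}\Phi(u)\le-c^kp$, hence $|m|\ge pc^k$ when $c^k>0$, together with the two structural facts: (i) if $c^k>0$ then $m<0$, and then every minimizer $u^*$ of $\Phi$ over $\{G\le1\}$ lies on the boundary $G(u^*)=1$ — because scaling an interior $u^*\ne0$ up to the boundary strictly decreases the one-homogeneous $\Phi$, and $u^*=0$ has $\Phi(0)=0>m$; and (ii) $\Phi'_{f^k}(0)=0$.

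For the first inclusion I would dispose of $c^k=0$ trivially ($\Phi'_{f^k}=\Phi$). For $c^k>0$, fix $u^*\in\mathop{\rm arg\,min}_{G(u)\le1}\Phi$; by (i), $\Phi'_{f^k}(u^*)=m+c^k$, and $m\le-c^kp$ makes this $\le0=\Phi'_{f^k}(0)$, handling $v=0$. For feasible $v\ne0$ I would rescale: with $\nu:=G(v)^{-1/p}\ge1$ the point $\nu v$ is feasible, so $\Phi(v)\ge m/\nu$ and $G(v)=\nu^{-p}$, giving $\Phi'_{f^k}(v)\ge h(\nu):=m/\nu+c^k\nu^{-p}$; then I would check $h'(\nu)=\nu^{-(p+1)}\big(|m|\nu^{p-1}-pc^k\big)\ge0$ on $[1,\infty)$ (this is exactly where $|m|\ge pc^k$ enters), so $h(\nu)\ge h(1)=m+c^k=\Phi'_{f^k}(u^*)$.

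For statement (1) ($p>1$, $c^k>0$) I would define the single constant $\nu:=\big(|m|/(pc^k)\big)^{1/(p-1)}\ge1$ — independent of which minimizer is chosen, which is what makes $\nu\cdot\mathop{\rm arg\,min}$ meaningful — and show, for $u^*\in\mathop{\rm arg\,min}_{G(u)\le1}\Phi$ (which has $G(u^*)=1$ by (i)), that $\nu u^*$ globally minimizes $\Phi'_{f^k}$. Writing an arbitrary $w\ne0$ as $s\hat w$ with $s=G(w)^{1/p}$ and $G(\hat w)=1$, I get $\Phi'_{f^k}(w)=s\Phi(\hat w)+c^ks^p\ge\psi(s):=ms+c^ks^p$; the function $\psi$ is strictly convex on $[0,\infty)$ with its unique critical point at $s=\nu$, hence is globally minimized there, and $\Phi'_{f^k}(\nu u^*)=m\nu+c^k\nu^p=\psi(\nu)$, while $\psi(0)=0>\psi(\nu)$ — so $\Phi'_{f^k}(w)\ge\Phi'_{f^k}(\nu u^*)$ for all $w$. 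For statement (2) I would note that $f^k\in\mathop{\rm arg\,min}_{G(u)\le1}\Phi$ forces $m=-c^kp$, i.e. $|m|=pc^k$, and split into three regimes: if $c^k=0$ then $m=0$, so $\Phi\ge0$ everywhere and $\Phi'_{f^k}=\Phi$ gives it; if $c^k>0$ and $p>1$, then $|m|=pc^k$ makes $\nu=1$ in (1), which is precisely (2); if $c^k>0$ and $p=1$, then $\Phi'_{f^k}$ is one-homogeneous, every minimizer $u^*$ has $G(u^*)=1$ so $\Phi'_{f^k}(u^*)=m+c^k=0$, while on $\{G(u)=1\}$ one has $\Phi'_{f^k}(u)=\Phi(u)+c^k\ge m+c^k=0$ and hence $\Phi'_{f^k}\ge0$ everywhere by homogeneity, so $u^*$ is a global minimizer.

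The analytic steps — monotonicity of $h$ and locating the minimum of $\psi$ — are routine once $|m|\ge pc^k$ is in hand; I expect the real care to lie in the bookkeeping of statements (1)–(2): keeping the regimes $c^k=0$, $p>1$, and $p=1$ apart, and verifying that the rescaling factor $\nu$ is the same for every minimizer and that all minimizers of $\Phi$ sit on the boundary, since both are needed for the set inclusions to read correctly.
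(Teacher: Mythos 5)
Your proof is correct and follows essentially the same route as the paper: both arguments use the homogeneity of $\Phi^{c^k}_{f^k}$ and $G$ to reduce everything to a one-dimensional problem in the scaling parameter along rays through minimizers, with the decisive input being $m=\min_{G(u)\le 1}\Phi^{c^k}_{f^k}(u)\le \Phi^{c^k}_{f^k}(f^k)=-c^k p$ from the Euler identity. Your write-up is somewhat more explicit than the paper's (it identifies the scaling factor $\nu=(|m|/(pc^k))^{1/(p-1)}$ in part (1), verifies boundary attainment of minimizers when $c^k>0$, and separates the $p=1$ and $c^k=0$ regimes in part (2), which the paper's derivative computation leaves implicit), but the underlying idea is the same.
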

\begin{proof}\smartqed
For fixed $\xi\ge0$ it follows from the one-homogeneity of $\Phi^{c^k}_{f^k}$ that any minimizer of $\mathop{\rm arg\,min}\limits_{G(u)=\xi}\Phi_{f^k}'(u)$ is a multiple of one $f^{k+1}\in\mathop{\rm arg\,min}\limits_{G(u)\le1} \Phi_{f^k}^{c^k}(u)$,
so let us look at $\nu f^{k+1}$ with $G(f^{k+1})=1$.
We get from the homogeneity of $\Phi_{f^k}^{c^k}$ and $G$ for $\nu>0$ that
\[ \frac{\partial}{\partial \nu} \big(\Phi'_{f^k}(\nu f^{k+1})\big)
= \Phi_{f^k}^{c^k}(f^{k+1}) + c^k p \nu^{p-1} \leq c^k p (\nu^{p-1}-1),\]
which is non-positive for $\nu \in (0,1]$ and with $\Phi'_{f^k}(0)=0\ge\Phi'_{f^k}(f^k)=c^k(1-p)$ it follows that a minimum is attained at $\nu\ge1$.
If $p>1,c^k>0$ then the global optimum of $\Phi'_{f^k}$ exists and by the previous arguments is attained at multiples of $f^{k+1}\in\mathop{\rm arg\,min}\limits_{G(u)\le1} \Phi_{f^k}^{c^k}(u)$.
If $f^k\in\mathop{\rm arg\,min}\limits_{G(u)\le1} \Phi_{f^k}^{c^k}(u)$ then also the global optimum of $\Phi'_{f^k}$ exists
and the claim follows since $\nu=1$ is a minimizer of $\Phi'_{f^k}(\nu f^k)=-\nu c^kp+\nu^p c^k$.
\qed\end{proof}

Note that $G(f^{k})=1$ is no restriction since we get from the one-homogeneity of $\Phi_{f^k}^{c^k}$ that $G(f^k)=1$ for all $k$.
The following lemma verifies the intuition that the strength of the proximal term of RatioDCA-prox 
controls in some sense how near successive iterates are.

\begin{lemma}
Let
$f_1^{k+1}\in\mathop{\rm arg\,min}\limits_{G(u)\le1}\Phi_{f^k}^{c^k}(u),$ and 
$f_2^{k+1}\in\mathop{\rm arg\,min}\limits_{G(u)\le1}\Phi_{f^k}^{d^k}(u)$.\\
If $c^k\le d^k$ then $\left\langle{f^{k+1}_1,g(f^k)}\right\rangle\le\left\langle{f_2^{k+1},g(f^k)}\right\rangle$.
\end{lemma}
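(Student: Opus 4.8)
The plan is to use the classical interchange (monotone comparative statics) argument for parametrized minimization problems. Write $\Phi_{f^k}^{c}(u)=\Psi(u)-c\left\langle{u,g(f^k)}\right\rangle$, where
\[
\Psi(u):=R_1(u)-\left\langle{u,r_2(f^k)}\right\rangle+\lambda^k\Big(S_2(u)-\left\langle{u,s_1(f^k)}\right\rangle\Big)
\]
collects the terms not depending on the proximal parameter, and abbreviate $h(u):=\left\langle{u,g(f^k)}\right\rangle$, so that the objective of the inner problem is $\Psi(u)-c\,h(u)$ with the same feasible set $\{G(u)\le1\}$ for every value of the parameter. Both $f_1^{k+1}$ and $f_2^{k+1}$ are feasible points of this common set.

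First I would test each minimizer against the other in the respective subproblem. Optimality of $f_1^{k+1}$ for parameter $c^k$ and of $f_2^{k+1}$ for parameter $d^k$ gives
\[
\Psi(f_1^{k+1})-c^k h(f_1^{k+1})\le\Psi(f_2^{k+1})-c^k h(f_2^{k+1}),
\]
\[
\Psi(f_2^{k+1})-d^k h(f_2^{k+1})\le\Psi(f_1^{k+1})-d^k h(f_1^{k+1}).
\]
Adding the two inequalities, the terms $\Psi(f_1^{k+1})+\Psi(f_2^{k+1})$ cancel on both sides, and after rearranging one is left with $(d^k-c^k)\big(h(f_2^{k+1})-h(f_1^{k+1})\big)\ge0$. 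Since $d^k-c^k\ge0$ by hypothesis, this yields $h(f_1^{k+1})\le h(f_2^{k+1})$, i.e.\ $\left\langle{f_1^{k+1},g(f^k)}\right\rangle\le\left\langle{f_2^{k+1},g(f^k)}\right\rangle$, which is the claim.

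There is no real obstacle here: the computation uses nothing beyond the optimality of the two minimizers and the fact that they belong to the same feasible set $\{G(u)\le1\}$, so that each may legitimately be used as a competitor in the other subproblem; in particular, neither convexity nor homogeneity of $R_1,S_2,G$ is needed, and $\Psi$ enters only as an auxiliary quantity that cancels. The only point worth stating explicitly in the write-up is the validity of the two test inequalities, which rests on this shared feasibility.
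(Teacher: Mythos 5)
Your argument is correct and essentially identical to the paper's: the paper chains the two optimality inequalities into a single string beginning and ending at $\Phi_{f^k}^{d^k}(f_2^{k+1})$, which after cancellation yields exactly your inequality $(d^k-c^k)\bigl(\left\langle f_2^{k+1},g(f^k)\right\rangle-\left\langle f_1^{k+1},g(f^k)\right\rangle\bigr)\ge0$. (Both versions implicitly divide by $d^k-c^k$, so the conclusion is only forced when $c^k<d^k$; in the degenerate case $c^k=d^k$ it holds merely up to the choice of minimizer, a caveat your write-up shares with the paper.)
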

\begin{proof}\smartqed
This follows from
\begin{equation*}
\begin{aligned}
\Phi_{f^k}^{d^k}(f_2^{k+1})&\le\Phi_{f^k}^{d^k}(f_1^{k+1})=\Phi_{f^k}^{c^k}(f_1^{k+1})+(c^k-d^k)\left\langle f_1^{k+1},g(f^k) \right\rangle\\
&\le\Phi_{f^k}^{c^k}(f_2^{k+1})+(c^k-d^k)\left\langle f_1^{k+1},g(f^k) \right\rangle\\
&=\Phi_{f^k}^{d^k}(f_2^{k+1})+(d^k-c^k)\left\langle f_2^{k+1},g(f^k) \right\rangle+(c^k-d^k)\left\langle f_1^{k+1},g(f^k) \right\rangle.\;\qed
\end{aligned}
\end{equation*}
\end{proof}
\begin{remark}
\label{Jost_Setzer_Hein:re:nonexact}
As all proofs can be split up into the individual steps we may choose different functions $G$ in every step of the algorithm.
Moreover, it will not be necessary that $f^{k+1}$ is an exact minimizer of the inner problem, but we will only use that
$\Phi_{f^k}^{c^k}(f^{k+1})<\Phi_{f^k}^{c^k}(f^k)$.
\end{remark}
\subsection{Special cases}
It is easy to see that we get for $c^k=0$ and $G=\|\cdot\|_2$ the RatioDCA \cite{Jost_Setzer_Hein:HeiSet2011} as a special case of the RatioDCA-prox. Moreover, Lemma \ref{Jost_Setzer_Hein:le:decomposition}
shows that the RatioDCA-prox corresponds to the RatioDCA with a general constraint set for the d.c. decomposition of the ratio $F$ given in \eqref{Jost_Setzer_Hein:eq:dc-F}.

If we apply RatioDCA-prox to the ratio cut problem, where $\hat{S}(C)=|C||\overline{C}|$, then $R(u)=R_1(u)=\sum_{i,j=1}^n w_{ij}|u_i-u_j|$ and \cite{Jost_Setzer_Hein:BLUB12} chose $S(u)=S_1(u)=\left\|{u-\mathop{\rm mean}\nolimits(u)\mathbf{1}}_1\right\|$.
The following lemma shows that for a particular choice of $G$ and $c^k$, RatioDCA-prox and algorithm 1 of \cite{Jost_Setzer_Hein:BLUB12}, which calculates iterates $\tilde{f}^{k+1}$ for $v^k\in\partial S(f^k)$ by
\begin{equation*}
 h^{k+1}=\mathop{\rm arg\,min}\limits_u\left\{\frac{1}{2}\sum_{i,j}w_{ij}|u_i-u_j|+\frac{\lambda^k}{2c}\|u-(\tilde f^k+cv^k)\|_2^2\right\},\; \tilde{f}^{k+1}=h^{k+1}/\left\|{h^{k+1}}_2\right\|,
\end{equation*}
 produce the same sequence if given the same initialization.
\begin{lemma}
If $f^0=\tilde f^0$, $\mathop{\rm mean}\nolimits(f^0)=0$, $c>0$ and one uses the same subgradients in each step then, for the sequence $\tilde f^k$ produced by algorithm 1 of \cite{Jost_Setzer_Hein:BLUB12} and $f^k$ 
produced by RatioDCA-prox with $c^k=\frac{\lambda^k}{2c}$ and $G(u)=\|u\|_2^2$, we have $\tilde f^k=f^k$ for all $k$.
\end{lemma}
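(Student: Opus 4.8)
The plan is to argue by induction on $k$, the bridge being Lemma~\ref{Jost_Setzer_Hein:le:decomposition}, which links the ball-constrained inner problem of RatioDCA-prox to the unconstrained, quadratically penalized inner problem solved by algorithm~1 of \cite{Jost_Setzer_Hein:BLUB12}. The base case $k=0$ is the hypothesis $f^0=\tilde f^0$, where $\mathop{\rm mean}\nolimits(f^0)=0$ and $G(f^0)=\|f^0\|_2^2=1$ (the initialization of RatioDCA-prox), and both methods set $\lambda^0=F(f^0)$. For the inductive step I assume $f^k=\tilde f^k$ with $\mathop{\rm mean}\nolimits(f^k)=0$ and $\|f^k\|_2=1$; then the Rayleigh quotients $\lambda^k=F(f^k)$ agree, so $c^k=\lambda^k/(2c)$ is the same number in both algorithms, and I may take the common subgradient $v^k=s_1(f^k)\in\partial S(f^k)$ (here $\partial S_1=\partial S$ since $S_2=0$). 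Outside the degenerate case $\lambda^k=0$, in which $f^k$ is already a global minimizer of $F\ge0$ and both algorithms stop, we have $c^k>0$.

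First I would specialize the RatioDCA-prox inner objective to the ratio cut instance. Since $R_2=0$, $S_2=0$, $S_1=S$, and $G(u)=\|u\|_2^2$ is differentiable with $g(f^k)=2f^k$, it is
\[
\Phi_{f^k}^{c^k}(u)=R_1(u)-\lambda^k\langle u,s_1(f^k)\rangle-2c^k\langle u,f^k\rangle ,
\]
minimized over $\|u\|_2\le1$, and by the d.c.\ decomposition \eqref{Jost_Setzer_Hein:eq:dc-F} the associated function in Lemma~\ref{Jost_Setzer_Hein:le:decomposition} is $\Phi_{f^k}'(u)=\Phi_{f^k}^{c^k}(u)+c^k\|u\|_2^2$. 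The key step is then a direct computation: substituting $\lambda^k/(2c)=c^k$, expanding $c^k\|u-(\tilde f^k+c\,v^k)\|_2^2$ and using $\tilde f^k=f^k$, $v^k=s_1(f^k)$ and $2c\,c^k=\lambda^k$, the objective of algorithm~1 of \cite{Jost_Setzer_Hein:BLUB12} equals $\Phi_{f^k}'(u)$ up to an additive constant. Hence $h^{k+1}=\mathop{\rm arg\,min}_u\Phi_{f^k}'(u)$, and this minimizer is \emph{unique}, since the strictly convex, coercive term $c^k\|u\|_2^2$ (with $c^k>0$) is added to the convex $\Phi_{f^k}^{c^k}$.

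Now I would invoke Lemma~\ref{Jost_Setzer_Hein:le:decomposition}: with $p=2>1$ and $c^k>0$, $\mathop{\rm arg\,min}_u\Phi_{f^k}'(u)\supseteq\nu\cdot\mathop{\rm arg\,min}_{G(u)\le1}\Phi_{f^k}^{c^k}(u)$ for some $\nu\ge1$. Since the left-hand side is the singleton $\{h^{k+1}\}$, the ball-constrained minimizer is unique and equals $h^{k+1}/\nu$; and because $\Phi_{f^k}^{c^k}(f^k)=-2c^kG(f^k)=-2c^k<0$ by the Euler identity, this minimizer lies on the unit sphere, forcing $\nu=\|h^{k+1}\|_2$. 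Thus the RatioDCA-prox iterate is exactly $f^{k+1}=h^{k+1}/\|h^{k+1}\|_2=\tilde f^{k+1}$, and $\lambda^{k+1}=F(f^{k+1})$ keeps the two $\lambda$-sequences aligned. To close the induction I would check the remaining invariants: every element of $\partial S(u)$ for $S(u)=\|u-\mathop{\rm mean}\nolimits(u)\mathbf{1}\|_1$ has zero mean, so $\tilde f^k+c\,v^k$ is mean-zero; since $R_1$ is translation invariant while $\|\,\cdot-w\|_2^2$ with $w$ mean-zero attains its minimum over translates at a mean-zero point, $h^{k+1}$, and hence $f^{k+1}=\tilde f^{k+1}$, is again mean-zero with $\|f^{k+1}\|_2=1$---this is where $\mathop{\rm mean}\nolimits(f^0)=0$ is used, and it also guarantees $S(f^{k+1})\neq0$ so that $F(f^{k+1})$ is well defined.

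I expect the main obstacle to be the constant-bookkeeping in the matching computation---the factor $2$ from $\partial\|\cdot\|_2^2$, the substitution $c^k=\lambda^k/(2c)$, and the normalization of the total variation fixed by \eqref{Jost_Setzer_Hein:eq:TV} and \eqref{Jost_Setzer_Hein:eq:minprob}---which must line up so that $\Phi_{f^k}'$ and the \cite{Jost_Setzer_Hein:BLUB12} objective agree \emph{exactly}, together with the care needed to upgrade the set inclusions of Lemma~\ref{Jost_Setzer_Hein:le:decomposition} into the precise statements that the ball-constrained minimizer is unique, sits on the unit sphere, and is recovered from $h^{k+1}$ by the very normalization $h^{k+1}/\|h^{k+1}\|_2$ that \cite{Jost_Setzer_Hein:BLUB12} applies.
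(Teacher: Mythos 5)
Your proof is correct and follows essentially the same route as the paper's: you rewrite the objective of algorithm~1 of \cite{Jost_Setzer_Hein:BLUB12} as $\Phi_{f^k}^{c^k}(u)+c^k\|u\|_2^2=\Phi'_{f^k}(u)$ up to a constant, invoke Lemma~\ref{Jost_Setzer_Hein:le:decomposition} to relate the unconstrained and ball-constrained minimizers, and use strict convexity of $\|\cdot\|_2^2$ for uniqueness. Your version is more detailed than the paper's (explicit induction, the scaling factor $\nu=\|h^{k+1}\|_2$, and the mean-zero invariant), but the underlying argument is the same.
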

\begin{proof}\smartqed
If $f^k=\tilde f^k$ and we choose $v^k:=s_1(f^k)=s_1(\tilde f^k)=(\mathbb{I}-\frac{1}{n}\mathbf{1}\mathbf{1}^T)\mathop{\rm sign}\limits(f^k-\mathop{\rm mean}\nolimits(f^k))$. For RatioDCA-prox we get $f^{k+1}$ by
\begin{equation*}
f^{k+1}=\mathop{\rm arg\,min}\limits_{\|u\|^2_2\le1}\Phi_{f^k}^{c^k}(u)
\end{equation*}
and for the algorithm 1 of \cite{Jost_Setzer_Hein:BLUB12}
\begin{equation*}
\begin{aligned}
h^{k+1}&=\mathop{\rm arg\,min}\limits_u\left\{R(u)+\frac{\lambda^k}{2c}\left(\|u\|_2^2-2\left\langle{u,f^k}\right\rangle-2\left\langle{u,cv^k}\right\rangle\right)\right\}\\
&=\mathop{\rm arg\,min}\limits_u\left\{\Phi_{f^k}^{c^k}(u)+\frac{\lambda^k}{2c}\|u\|_2^2\right\}\\
\end{aligned}
\end{equation*}
Finally, $\tilde f^{k+1}=h^{k+1}/\left\|{h^{k+1}}_2\right\|$ and application of Lemma \ref{Jost_Setzer_Hein:le:decomposition} then shows that $\tilde f^{k+1}=f^{k+1}$. 
As  $\|.\|_2^2$ is strictly convex, the minimizers are unique.
\qed\end{proof}
Analogously, the algorithm presented in \cite{Jost_Setzer_Hein:BLUB2012} is a special case of RatioDCA-prox applied to the
ratio cheeger cut where $R(u)=R_1(u)=\sum_{i,j}w_{ij}|u_i-u_j|$ and $S(u)=S_1(u)=\sum_i|u_i-\mathop{\rm median}\nolimits(u)|$. 

\subsection{Monotony and convergence}
In this section we show that the sequence $F(f^k)$ produced by RatioDCA-prox is monotonically decreasing similar to the RatioDCA of \cite{Jost_Setzer_Hein:HeiSet2011} and, additionally, we can show a convergence property, which generalizes the results of \cite{Jost_Setzer_Hein:BLUB2012,Jost_Setzer_Hein:BLUB12}.

\begin{proposition}
\label{Jost_Setzer_Hein:prop:monotonousfalling}
For every nonnegative sequence $c^k$ any sequence $f^k$ produced by RatioDCA-prox satisfies $F(f^{k+1})<F(f^k)$
for all $k\ge0$ or the sequence terminates. 
Moreover, we get that ${c^k\left\langle{f^{k+1}-f^k,g(f^k)}\right\rangle\to 0}$.
\end{proposition}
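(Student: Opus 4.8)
The plan is to adapt the monotonicity proof of the RatioDCA, keeping the extra proximal term $-c^k\langle u,g(f^k)\rangle$ throughout the estimates and exploiting its sign; the genuinely new point is the second assertion, which will follow by squeezing the proximal contribution between $0$ and the decrease of $F$. I will use that $G(f^k)=1$ for all $k$ (already established) and the generalized Euler identity $\langle f^k,a(f^k)\rangle=q\,A(f^k)$ for a $q$-homogeneous $A$ with $a(f^k)\in\partial A(f^k)$.

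First I would evaluate the inner objective at $f^k$: by Euler for the one-homogeneous $R_2,S_1$ and the $p$-homogeneous $G$, together with $\lambda^k=F(f^k)=R(f^k)/S(f^k)$, one gets $\Phi_{f^k}^{c^k}(f^k)=R(f^k)-\lambda^kS(f^k)-c^kp=-c^kp$, and since $f^k$ is feasible for the inner problem and $f^{k+1}$ is a minimizer, $\Phi_{f^k}^{c^k}(f^{k+1})\le-c^kp$. Then I would lower-bound $\Phi_{f^k}^{c^k}(f^{k+1})$ by replacing $\langle f^{k+1},r_2(f^k)\rangle$ and $\lambda^k\langle f^{k+1},s_1(f^k)\rangle$ by $R_2(f^{k+1})$ and $\lambda^kS_1(f^{k+1})$ via the subgradient inequalities for $R_2$ and $S_1$ (valid since $\lambda^k\ge0$), obtaining $\Phi_{f^k}^{c^k}(f^{k+1})\ge R(f^{k+1})-\lambda^kS(f^{k+1})-c^k\langle f^{k+1},g(f^k)\rangle$. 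Combining the two estimates and writing $p=\langle f^k,g(f^k)\rangle$ by Euler yields
\[
R(f^{k+1})-\lambda^kS(f^{k+1})\ \le\ c^k\bigl(\langle f^{k+1},g(f^k)\rangle-p\bigr)\ =\ c^k\langle f^{k+1}-f^k,g(f^k)\rangle .
\]

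The step I expect to be the crux is showing the right-hand side is nonpositive, i.e.\ $\langle f^{k+1},g(f^k)\rangle\le p$. Convexity of $G$ gives $G(f^{k+1})\ge G(f^k)+\langle g(f^k),f^{k+1}-f^k\rangle=1+\langle f^{k+1},g(f^k)\rangle-p$, and since $G(f^{k+1})\le1$ the bound follows. Dividing the displayed inequality by $S(f^{k+1})>0$ then gives $F(f^{k+1})\le F(f^k)$. For the strict inequality I would track the equality case: if $F(f^{k+1})=F(f^k)$ then every inequality above is tight, in particular $\Phi_{f^k}^{c^k}(f^{k+1})=-c^kp=\Phi_{f^k}^{c^k}(f^k)$, so $f^k$ is itself a minimizer of $\min_{G(u)\le1}\Phi_{f^k}^{c^k}(u)$ — which is exactly the stopping criterion — and the algorithm terminates.

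For the limit, I would rewrite the displayed inequality using $R(f^{k+1})-\lambda^kS(f^{k+1})=S(f^{k+1})(\lambda^{k+1}-\lambda^k)$ as
\[
0\ \le\ c^k\langle f^k-f^{k+1},g(f^k)\rangle\ \le\ S(f^{k+1})\,(\lambda^k-\lambda^{k+1}) .
\]
Since the constraint set $\{G\le1\}$ is compact (the standing assumption $G(f)=0\Leftrightarrow f=0$) and $S$ is continuous, $S$ is bounded on it by some $M>0$, so the middle term is at most $M(\lambda^k-\lambda^{k+1})$; as $\lambda^k=F(f^k)$ is non-increasing and bounded below by $0$ it converges, hence $\lambda^k-\lambda^{k+1}\to0$ and $c^k\langle f^{k+1}-f^k,g(f^k)\rangle\to0$ (trivially so if the algorithm stops after finitely many steps). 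The only mild obstacle is this last squeeze together with the a priori bound on $S$ over the constraint set; the descent itself is a routine adaptation of the RatioDCA estimates.
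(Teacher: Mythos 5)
Your argument is correct and follows essentially the same route as the paper's proof: the subgradient inequality for the one-homogeneous $R_2,S_1$, the convexity of $G$ to get $\langle f^{k+1}-f^k,g(f^k)\rangle\le 0$, and the squeeze of $c^k\langle f^k-f^{k+1},g(f^k)\rangle$ between $0$ and $S(f^{k+1})(\lambda^k-\lambda^{k+1})$ using boundedness of $S$ on the constraint set. The only cosmetic differences are that you obtain strictness via an equality-case analysis (the paper instead reads the strict inequality $\Phi_{f^k}^{c^k}(f^{k+1})<\Phi_{f^k}^{c^k}(f^k)$ directly off the termination criterion) and that you evaluate $\Phi_{f^k}^{c^k}(f^k)=-c^kp$ via Euler rather than leaving it as $-c^k\langle f^k,g(f^k)\rangle$.
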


\begin{proof}\smartqed
If the sequence does not terminate then $\Phi_{f^k}^{c^k}(f^{k+1})<\Phi_{f^k}^{c^k}(f^k)$ and it follows
\begin{eqnarray*}
R(f^{k+1})-\lambda^kS(f^{k+1})-c^k\left\langle f^{k+1},g(f^k)\right\rangle\le\Phi^{c^k}_{f^k}(f^{k+1})<\Phi^{c^k}_{f^k}(f^{k})=-c^k\left\langle{f^k,g(f^k)}\right\rangle,
\end{eqnarray*}
where we used that for any one-homogeneous convex function $A$ we have for all $f,g\in \mathbb R^V$ and all $a\in\partial A(g)$
\begin{equation*}
 A(f)\ge A(g)+\left\langle f-g,a\right\rangle=\left\langle f,a\right\rangle.
\end{equation*}
Adding $c^k\left\langle f^{k+1},g(f^k)\right\rangle$ gives
\begin{equation}
\label{Jost_Setzer_Hein:eq:monotony}
R(f^{k+1})-\lambda^kS(f^{k+1})<c^k\left\langle f^{k+1},g(f^k)\right\rangle-c^k\left\langle{f^k,g(f^k)}\right\rangle\le0
\end{equation}
where we used that since $G$ is convex
\begin{equation*}
\left\langle f^{k+1},g(f^k)\right\rangle-\left\langle{f^k,g(f^k)}\right\rangle\le G(f^{k+1})-G(f^k)=0.
\end{equation*}
Dividing \eqref{Jost_Setzer_Hein:eq:monotony} by $S(f^{k+1})$ gives $F(f^{k+1})<F(f^k)$.
As the sequence $F(f^{k})$ is bounded from below and monotonically decreasing and thus converging and $S(f^{k+1})$
is bounded on the constraint set, we get the convergence result from
\begin{equation*}
\lambda^{k+1}S(f^{k+1})-\lambda^kS(f^{k+1})\le c^k\left\langle{f^{k+1}-f^k,g(f^k)}\right\rangle\le0.
\end{equation*}
\qed\end{proof}
If we choose $G(u)=\frac{1}{2}\|u\|_2^2$ we get $g(f^k)=f^k$ and if $c^k$ is bounded from below $\|f^{k+1}-f^k\|_2\to0$ as in the case of \cite{Jost_Setzer_Hein:BLUB2012,Jost_Setzer_Hein:BLUB12} but we can show, that this convergence holds for any
strictly convex function $G$. 
\begin{proposition}\label{Jost_Setzer_Hein:prop:conv}
If $G$ is strictly convex and $c^k\geq\gamma>0$ for all $k$, then any sequence $f^k$ produced by RatioDCA-prox fulfills $\|f^{k+1}-f^k\|_2\to0$.
\end{proposition}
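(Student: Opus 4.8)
The starting point is the convergence statement already obtained in Proposition \ref{Jost_Setzer_Hein:prop:monotonousfalling}, namely $c^k\langle f^{k+1}-f^k,g(f^k)\rangle\to 0$. Since $c^k\ge\gamma>0$, this immediately yields $\langle f^{k+1}-f^k,g(f^k)\rangle\to 0$. Recall also that $G(f^k)=1$ for every $k$, so that the Bregman-type quantity
\[
D_k:=G(f^{k+1})-G(f^k)-\langle f^{k+1}-f^k,g(f^k)\rangle=-\langle f^{k+1}-f^k,g(f^k)\rangle
\]
is nonnegative (the subgradient inequality for the convex function $G$) and satisfies $D_k\to 0$. The plan is to turn $D_k\to 0$ into $\|f^{k+1}-f^k\|_2\to 0$ by exploiting strict convexity of $G$ together with compactness, the point being that a merely strictly convex $G$ does not come with a quantitative modulus, so one cannot simply bound $D_k$ from below by a function of $\|f^{k+1}-f^k\|_2$ without a compactness argument.

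Next I would collect the compactness ingredients. The constraint set $K:=\{u\mid G(u)\le 1\}$ is compact, which is exactly the standing assumption $G(f)=0\Leftrightarrow f=0$ for the convex, $p$-homogeneous, nonnegative $G$, and all iterates $f^k,f^{k+1}$ lie in $K$. Since $G$ is finite on all of $\mathbb{R}^V$, its subdifferential is locally bounded, so the chosen subgradients $g(f^k)\in\partial G(f^k)$ all lie in a fixed bounded set $B$; moreover $\partial G$ has closed graph (upper semicontinuity of the subdifferential of a convex function).

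Now I would argue by contradiction. If $\|f^{k+1}-f^k\|_2\not\to 0$, pick $\varepsilon>0$ and a subsequence with $\|f^{k_j+1}-f^{k_j}\|_2\ge\varepsilon$. By compactness of $K$ and $B$, pass to a further subsequence along which $f^{k_j}\to a$, $f^{k_j+1}\to b$ and $g(f^{k_j})\to\bar g$. Then $\|b-a\|_2\ge\varepsilon$, so $a\ne b$, while $\bar g\in\partial G(a)$ by the closed-graph property. Passing to the limit in the definition of $D_{k_j}$ gives $D_{k_j}\to G(b)-G(a)-\langle b-a,\bar g\rangle$. But strict convexity of $G$ makes the subgradient inequality strict at distinct points, i.e.\ $G(b)>G(a)+\langle b-a,\bar g\rangle$, so the limit is strictly positive, contradicting $D_k\to 0$. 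Hence $\|f^{k+1}-f^k\|_2\to 0$.

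The genuinely delicate step is the handling of the subgradients: one must ensure that $g(f^k)$ stays bounded (local boundedness of $\partial G$), that any subsequential limit is again a subgradient at the corresponding limit point (closed graph of $\partial G$), and that strict convexity upgrades the subgradient inequality at $a$ and $b\ne a$ to a strict one; everything else is the routine extraction of convergent subsequences on the compact set $K$. Equivalently, one could phrase the argument without subsequences by showing directly that $\inf\{\,G(x)-G(y)-\langle x-y,g\rangle \mid x,y\in K,\ g\in\partial G(y),\ \|x-y\|_2\ge\varepsilon\,\}>0$ for each $\varepsilon>0$, which is the same compactness-plus-strict-convexity fact and makes the implication $D_k\to 0\Rightarrow\|f^{k+1}-f^k\|_2\to 0$ transparent.
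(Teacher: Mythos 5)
Your proof is correct, and it rests on the same two pillars as the paper's: the conclusion $c^k\langle f^{k+1}-f^k,g(f^k)\rangle\to 0$ of Proposition \ref{Jost_Setzer_Hein:prop:monotonousfalling} together with $G(f^{k+1})=G(f^k)=1$, and the observation that strict convexity forces $-\langle f^{k+1}-f^k,g(f^k)\rangle$ to be strictly positive whenever $f^{k+1}\neq f^k$, made uniform by compactness. The difference is in how uniformity over $k$ is obtained. The paper fixes $k$, notes that the subgradient inequality cannot be an equality at distinct points of the level set $\{G=1\}$, and bounds $\langle g(f^k),f^{k+1}-f^k\rangle$ by $\delta:=\max_{u\in G_\epsilon}\langle g(f^k),u-f^k\rangle<0$; as written, this $\delta$ still depends on $k$ through $f^k$ and $g(f^k)$, and the step from ``$\delta<0$ for each $k$'' to a contradiction with the limit being $0$ is left implicit. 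You close exactly that gap: by extracting convergent subsequences $f^{k_j}\to a$, $f^{k_j+1}\to b$, $g(f^{k_j})\to\bar g$ (using local boundedness and the closed graph of $\partial G$) you obtain a single strictly positive limit $-\langle b-a,\bar g\rangle>0$, or equivalently your uniform infimum over $\{x,y\in K,\ g\in\partial G(y),\ \|x-y\|\ge\varepsilon\}$. So your route is the same in spirit but more careful about the $k$-dependence; the price is that you must invoke the local boundedness and upper semicontinuity of the subdifferential, which the paper's per-iterate maximum avoids mentioning.
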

\begin{proof}\smartqed
As in the proof of Proposition \ref{Jost_Setzer_Hein:prop:monotonousfalling}, we have $\left\langle{g(f^k),f^{k+1}-f^k}\right\rangle\le0$
and $G(f^{k+1})=G(f^k)=1$. Suppose $f^{k+1}\in G_{\epsilon}:=\{u\vert G(u)=1,\|u-f^k\|\ge\epsilon\}$.
If $\left\langle{g(f^k),f^{k+1}-f^k}\right\rangle=0$, then the first order condition yields for $0<t<1$
\begin{equation*}
 G( f^k + t(f^{k+1}-f^k)) \ge G(f^k) + \left\langle{g(f^k), t(f^{k+1}-f^k)}\right\rangle = G(f^k)=1,
\end{equation*}
which is a contradiction to the strict convexity of $G$ as for $0<t<1$, 
\[  G( f^k + t(f^{k+1}-f^k)) < (1-t)G(f^k) + t G(f^{k+1})=1.\]
Thus with the compactness of $G_{\epsilon}$ we get
\begin{equation*}
\left\langle{g(f^k),f^{k+1}-f^k}\right\rangle\le\mathop{\rm max}\nolimits_{u\in G_{\epsilon}}\left\langle{g(f^k),u-f^k}\right\rangle=:\delta<0.
\end{equation*}
However, with $c^k\geq\gamma >0$ for all $k$ this contradicts for $k$ large enough the result $\left\langle{f^{k+1}-f^k,g(f^k)}\right\rangle \rightarrow 0$ as $k\rightarrow \infty$
of Proposition \ref{Jost_Setzer_Hein:prop:monotonousfalling}. Thus under the stated conditions $\|f^{k+1}-f^k\|_2 \rightarrow 0$ as $k\rightarrow \infty$.
\qed\end{proof}
While the previous result does not establish convergence of the sequence, it establishes that the set of accumulation points has to be connected.

As we are interested in minimizing the ratio $F$ we want to find vectors $f$ with $S(f)\neq0$ 
\begin{lemma}\label{Jost_Setzer_Hein:le:feasible2}
If $S(f^0)\neq0$ then every vector in the sequence $f^k$ produced by RatioDCA-prox fulfills $S(f^k)\neq0$.
\end{lemma}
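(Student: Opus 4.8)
The plan is to prove the statement by induction on $k$, the base case being the hypothesis $S(f^0)\neq0$; since $S$ is non-negative this already gives $S(f^0)>0$, so $\lambda^0=F(f^0)$ is a well-defined, finite, non-negative number.

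For the inductive step I would assume $S(f^k)\neq0$, hence $S(f^k)>0$ and $\lambda^k=R(f^k)/S(f^k)\ge0$ is well defined and finite. The key point is that the derivation of inequality \eqref{Jost_Setzer_Hein:eq:monotony} in the proof of Proposition \ref{Jost_Setzer_Hein:prop:monotonousfalling} uses only that $\lambda^k=F(f^k)$ is well defined together with the strict descent $\Phi_{f^k}^{c^k}(f^{k+1})<\Phi_{f^k}^{c^k}(f^k)$; it never refers to the value of $S(f^{k+1})$. I would therefore invoke it to obtain $R(f^{k+1})-\lambda^k S(f^{k+1})<0$, i.e.\ $R(f^{k+1})<\lambda^k S(f^{k+1})$, and then argue by contradiction: if $S(f^{k+1})=0$, the right-hand side equals $\lambda^k\cdot0=0$ (here finiteness of $\lambda^k$ is used), so $R(f^{k+1})<0$, contradicting the non-negativity of $R$. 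This yields $S(f^{k+1})\neq0$ and closes the induction; as a by-product it also retroactively justifies the division by $S(f^{k+1})$ carried out at the end of the proof of Proposition \ref{Jost_Setzer_Hein:prop:monotonousfalling}.

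The step I expect to be the main obstacle is the one hidden in the phrase \emph{strict descent}: \eqref{Jost_Setzer_Hein:eq:monotony} is strict only when $f^k$ is not already a minimizer of $\Phi_{f^k}^{c^k}$ on $\{u : G(u)\le1\}$. When it is such a minimizer, the stopping criterion of Algorithm \ref{Jost_Setzer_Hein:alg:RDP} is met, and I would treat this degenerate case by using the freedom in the choice of $f^{k+1}$ in Algorithm \ref{Jost_Setzer_Hein:alg:RDP} together with Remark \ref{Jost_Setzer_Hein:re:nonexact} to take $f^{k+1}=f^k$, so that $S(f^{k+1})=S(f^k)\neq0$ by the inductive hypothesis. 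If one insists on allowing an arbitrary minimizer $f^{k+1}\neq f^k$, the fallback is to observe that the non-strict inequality $R(f^{k+1})-\lambda^k S(f^{k+1})\le0$ together with $S(f^{k+1})=0$ forces $R(f^{k+1})=0$, and then the same subgradient inequalities used in Proposition \ref{Jost_Setzer_Hein:prop:monotonousfalling} give $\Phi_{f^k}^{c^k}(f^{k+1})\ge\Phi_{f^k}^{c^k}(f^k)$; thus $f^{k+1}$ attains the same objective value as $f^k$ and the situation reduces to the degenerate case. In either case $S(f^{k+1})\neq0$ for every iterate the algorithm actually produces.
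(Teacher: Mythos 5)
Your argument is correct and is essentially the paper's own proof read in the contrapositive direction: the paper shows directly that any $h$ with $S(h)=0$ and $G(h)=1$ satisfies $\Phi^{c^k}_{f^k}(h)\ge\Phi^{c^k}_{f^k}(f^k)$ (so no strict descent to such an $h$ is possible unless the algorithm terminates), while you run the same chain of subgradient inequalities through \eqref{Jost_Setzer_Hein:eq:monotony} and contradict the non-negativity of $R$. The ingredients --- the one-homogeneous subgradient bounds, $\lambda^k\ge 0$ finite by induction, convexity of $G$, and the strict-descent-or-terminate dichotomy --- are identical, so this is the same proof.
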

\begin{proof}\smartqed
As $R_2$ and $S_1$ are one-homogeneous and $G(f^k)=1$, 
we have for any vector $h$ with
$S(h)=0$ and $G(h)=1$,
\begin{equation*}
\begin{aligned}
\Phi^{c^k}_{f^k}(h)&\ge R_1(h)-R_2(h)+\lambda^k(S_2(h)-S_1(h))-c^k\left\langle{h,g(f^k)}\right\rangle\\
&\ge R(h)-c^k\left\langle{f^k,g(f^k)}\right\rangle\ge-c^k\left\langle{f^k,g(f^k)}\right\rangle=\Phi^{c^k}_{f^k}(f^k)
\end{aligned}
\end{equation*}
where we have used that $\left\langle{g(f^k),h}\right\rangle\leq G(h)-G(f^k)+\left\langle{f^k,g(f^k)}\right\rangle=\left\langle{f^k,g(f^k)}\right\rangle$.
Further, if $f^k$ is a minimizer then the algorithm terminates.
\qed\end{proof}
\subsection{Choice of the constraint set and the proximal term}
While the iterates $f^{k}$ and thus the final result of RatioDCA and RatioDCA-prox differ in general, the following lemma shows that termination of RatioDCA
implies termination of RatioDCA-prox and under some conditions also the reverse implication holds true. Thus switching from RatioDCA to RatioDCA-prox at termination does
not allow to get further descent.
\begin{lemma}
\label{Jost_Setzer_Hein:le:equivprox}
Let $f^k_2$, $\|f^k_2\|_2=1$, $f^k_1=\frac{f^k_2}{G(f^k_2)^{\frac{1}{p}}}$, $c^k\ge0$, $s_1(f^k_2)=s_1(f^k_1), r_2(f^k_2)=r_2(f^k_1)$ as in the algorithm RatioDCA-prox and
\begin{equation*}
  \Omega_1 = \mathop{\rm arg\,min}\limits_{G(u)\le1} \Phi_{f^k_1}^{c^k}(u), \quad \textrm{ and } \quad \Omega_2 = \mathop{\rm arg\,min}\limits_{\|u\|_2\le1} \Phi^0_{f^k_2}(u).
\end{equation*}
Then the following implications hold:
\begin{enumerate}
\item If $f^k_2 \in \Omega_2$ then $f^k_1 \in \Omega_1$.
\item If $f_1^{k} \in \Omega_1$ and either $\partial G(f^k_1)=\{g(f^k_1)\}$ or $c^k=0$ then $f_2^{k} \in \Omega_2$.
\end{enumerate}
\end{lemma}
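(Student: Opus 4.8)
The plan is to reduce every assertion to a statement about a single convex one-homogeneous function. First I would note that, since $R_2$ and $S_1$ are one-homogeneous, their subdifferentials are constant along rays through the origin; hence the hypotheses $s_1(f^k_2)=s_1(f^k_1)$ and $r_2(f^k_2)=r_2(f^k_1)$, together with $\lambda^k=F(f^k_1)=F(f^k_2)$ (scale invariance of $F$, as in the algorithm), show that $\Phi^0_{f^k_1}$ and $\Phi^0_{f^k_2}$ are the same convex one-homogeneous function, say $\psi$, and that $\Phi^{c^k}_{f^k_1}(u)=\psi(u)-c^k\langle u,g(f^k_1)\rangle$. By the generalized Euler identity and $\lambda^k=R(\cdot)/S(\cdot)$ one gets $\psi(f^k_1)=\psi(f^k_2)=R(f^k_1)-\lambda^kS(f^k_1)=0$, and consequently $\Phi^{c^k}_{f^k_1}(f^k_1)=-c^k\langle f^k_1,g(f^k_1)\rangle=-c^k p\,G(f^k_1)=-c^kp$. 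I would also record two elementary facts: (i) from convexity of $G$, $g(f^k_1)\in\partial G(f^k_1)$, $G(f^k_1)=1$ and the generalized Euler identity, $\langle u,g(f^k_1)\rangle\le G(u)-G(f^k_1)+pG(f^k_1)=G(u)-1+p\le p$ whenever $G(u)\le1$; and (ii) since $\psi$ is convex one-homogeneous with $\psi(f^k_2)=0$ and $\|f^k_2\|_2=1$, the membership $f^k_2\in\Omega_2$ is equivalent (by rescaling to the unit sphere) to $\psi(u)\ge0$ for all $u\in\mathbb{R}^V$.

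For implication~1, assume $f^k_2\in\Omega_2$, so that $\psi\ge0$ everywhere by fact~(ii). Then for any $u$ with $G(u)\le1$,
\[
\Phi^{c^k}_{f^k_1}(u)=\psi(u)-c^k\langle u,g(f^k_1)\rangle\ \ge\ 0-c^kp\ =\ \Phi^{c^k}_{f^k_1}(f^k_1),
\]
using $\psi(u)\ge0$ and fact~(i); hence $f^k_1\in\Omega_1$.

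For implication~2, assume $f^k_1\in\Omega_1$. Since $G(f^k_1)=1$, the point $f^k_1$ lies on the boundary of the compact convex constraint set $\{G\le1\}$, whose interior contains $0$; so Slater's condition holds and the KKT conditions are necessary, giving $\rho\ge0$ and $\tilde g\in\partial G(f^k_1)$ with $c^kg(f^k_1)-\rho\tilde g\in\partial\psi(f^k_1)$. Applying the generalized Euler identity to the one-homogeneous $\psi$ yields $\langle f^k_1,\,c^kg(f^k_1)-\rho\tilde g\rangle=\psi(f^k_1)=0$, and since $\langle f^k_1,g(f^k_1)\rangle=\langle f^k_1,\tilde g\rangle=pG(f^k_1)=p$ this forces $(c^k-\rho)p=0$, i.e.\ $\rho=c^k$ (as $p\ge1$). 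Therefore $c^k\bigl(g(f^k_1)-\tilde g\bigr)\in\partial\psi(f^k_1)$, and this vector vanishes precisely because either $c^k=0$ or $\partial G(f^k_1)=\{g(f^k_1)\}$ (so $\tilde g=g(f^k_1)$) --- this is exactly where the two alternative hypotheses enter. Hence $0\in\partial\psi(f^k_1)$, which with $\psi(f^k_1)=0$ gives $\psi(u)\ge\psi(f^k_1)=0$ for all $u$, i.e.\ $f^k_2\in\Omega_2$ by fact~(ii).

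The delicate part is implication~2: a priori the proximal term $-c^k\langle u,g(f^k_1)\rangle$ could let $f^k_1$ minimize $\Phi^{c^k}_{f^k_1}$ over $\{G\le1\}$ even when $\psi$ fails to be nonnegative. What breaks the deadlock is that one-homogeneity pins the KKT multiplier $\rho$ to the value $c^k$, so that the proximal perturbation and the multiplier term cancel up to $c^k(g(f^k_1)-\tilde g)$, which is exactly the quantity the two listed hypotheses annihilate. The remaining ingredients --- the constraint qualification, the Euler identities for subgradients of homogeneous functions, and the rescalings in facts~(i)--(ii) --- are routine.
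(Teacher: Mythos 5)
Your proof is correct. Part 1 is essentially the paper's argument: from $\Phi^0_{f^k_2}(f^k_2)=0$ and one-homogeneity one gets $\psi\ge 0$ everywhere, and the bound $\langle u,g(f^k_1)\rangle\le p$ on $\{G\le 1\}$ shows $f^k_1$ attains the value $-c^kp$, hence is optimal. For Part 2 you take a genuinely different route: the paper first invokes Lemma~\ref{Jost_Setzer_Hein:le:decomposition} to pass to the unconstrained penalized problem $\min_u\{\Phi^{c^k}_{f^k_1}(u)+c^kG(u)\}$ and then reads off the stationarity condition $0\in\partial\Phi^0_{f^k_1}(f^k_1)-c^kg(f^k_1)+c^k\partial G(f^k_1)$, whereas you work directly with the KKT conditions of the constrained problem and use the Euler identities $\langle f^k_1,a\rangle=\psi(f^k_1)=0$ and $\langle f^k_1,\tilde g\rangle=p$ to pin the multiplier to $\rho=c^k$. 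Both arguments reduce to the same cancellation $c^k\bigl(g(f^k_1)-\tilde g\bigr)=0$ under the stated hypotheses; the paper's version buys brevity by reusing the decomposition lemma (which silently plays the role of identifying the Lagrange multiplier as $c^k$), while yours is self-contained, makes the role of the two alternative hypotheses explicit, and exposes exactly why homogeneity forces the proximal perturbation and the multiplier term to cancel. The auxiliary steps you use (Slater's condition for $\{G\le1\}$, the sum rule for the linear term, $0\notin\partial G(f^k_1)$ via $\langle f^k_1,\tilde g\rangle=p>0$) are all valid, so I see no gap.
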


\begin{proof}\smartqed
If $f_2^{k} \in \Omega_2$ then $\Phi^0_{f^k_2}(f^k_2)=0$.
As $\Phi^0_{f^k_2}$ is one-homogeneous, $f^k_2$ is also a global minimizer and thus for all $u \in \mathbb{R}^V$ with $G(u)\leq 1$, 
$\Phi^{c^k}_{f^k_1}(u)=\Phi^0_{f^k_1}(u)-c^k\left\langle{g(f^k_1),u}\right\rangle\geq -c^k \left\langle{g(f^k_1),u}\right\rangle\geq -c^k p$.
As $\left\langle{g(f^k_1),f^k_1}\right\rangle=p$, $f_1^k$ is minimizer which proves the first part.\\
On the other hand if
\begin{equation*}
f^k_1\in\mathop{\rm arg\,min}\limits_{G(u)\le1} \Phi_{f^k_1}^{c^k}(u),
\end{equation*}
then by Lemma \ref{Jost_Setzer_Hein:le:decomposition} also
\begin{equation*}
f^k_1\in\mathop{\rm arg\,min}\limits_u\left\{\Phi_{f^k_1}^{c^k}(u)+c^kG(u)\right\}.
\end{equation*}
$f^k_1$ being a global minimizer implies
\begin{equation*}
\begin{aligned}
0&\in \partial \left(\Phi_{f^k_1}^{c^k}+c^kG\right)(f^k_1)=\partial\Phi^0_{f^k_1}(f^k_1)-c^kg(f^k_1)+c^k\partial G(f^k_1)=\partial\Phi^0_{f^k_1}(f^k_1),
\end{aligned}
\end{equation*}
where we used that by assumption $c^k(g(f_1^k)-\partial G(f_1^k))=0$.
Thus $f^k_1$ is also a minimizer of $\Phi^0_{f^k_1}$ and the result follows with 
$\Phi^0_{f^k_1}(f^k_1)=\Phi^0_{f^k_1}\left(f^k_2\right)=0$ and $\Phi^0_{f^k_1}=\Phi^0_{f^k_2}$.
\qed\end{proof}
\subsection{Nonlinear eigenproblems}
\index{eigenproblem!nonlinear,critical point}
The sequence $F(f^k)$ is not only monotonically decreasing but we also show now that the sequence $f^k$ converges to a generalized nonlinear
eigenvector as introduced in \cite{Jost_Setzer_Hein:HeiBue2010}.

\begin{theorem}
\label{Jost_Setzer_Hein:th:accpoints}
Each cluster point $f^*$ of the sequence $f^k$ produced by RatioDCA-prox fulfills 
for a $c^*$ and with $\lambda^* = \frac{R(f^*)}{S(f^*)} \in \left[ 0, F(f^0)\right]$
\[ 0 \in \partial \big(R_1(f^*)+c^*G(f^*)\big) - \partial \big(R_2(f^*)+c^*G(f^*)\big)
 - \lambda^* \big(\partial S_1(f^*)- \partial S_2(f^*)\big).\]
If for every $f$ with $G(f)=1$ the subdifferential $\partial G(f)$ is unique or $c^k=0$ for all $k$,
then $f^*$ is an eigenvector with eigenvalue $\lambda^*$ in the sense that it fulfills 
\begin{equation}
  \label{Jost_Setzer_Hein:eq:nleigenvector} 
  0 \in \partial R_1(f^*) - \partial R_2(f^*) - \lambda^* \big(\partial S_1(f^*)- \partial S_2(f^*)\big).
\end{equation}
\end{theorem}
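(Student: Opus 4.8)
The plan is to pass to a limit along a convergent subsequence in the optimality of the inner problems; the crux is to show that the cluster point \emph{itself} --- not merely accumulation points of its successors $f^{k+1}$ --- minimizes the limiting inner objective, which is exactly what lets one dispense with $\|f^{k+1}-f^k\|_2\to 0$. First I would fix the limits: by Proposition~\ref{Jost_Setzer_Hein:prop:monotonousfalling} the sequence $\lambda^k=F(f^k)$ is monotonically decreasing, and since $R,S\ge 0$ with $S(f^k)\neq 0$ (Lemma~\ref{Jost_Setzer_Hein:le:feasible2}) it is bounded below by $0$, hence $\lambda^k\to\lambda^*\in[0,F(f^0)]$. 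For a cluster point $f^{k_j}\to f^*$, continuity of the finite convex functions $R,S$ gives $R(f^{k_j})\to R(f^*)$, $S(f^{k_j})\to S(f^*)$, so $\lambda^*=F(f^*)=R(f^*)/S(f^*)$ (this uses $S(f^*)\neq 0$, which the statement tacitly assumes) and $G(f^*)=1$. The subdifferentials of $R_2,S_1,G$ are locally bounded with closed graph and the iterates lie in the compact set $\{G\le 1\}$, so after passing to a further subsequence we may assume $r_2(f^{k_j})\to r_2^*\in\partial R_2(f^*)$, $s_1(f^{k_j})\to s_1^*\in\partial S_1(f^*)$, $g(f^{k_j})\to g^*\in\partial G(f^*)$ and (using that $(c^k)$ is bounded) $c^{k_j}\to c^*\ge 0$. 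Set
\begin{equation*}
\Phi^*(u)=R_1(u)-\langle u,r_2^*\rangle+\lambda^*\bigl(S_2(u)-\langle u,s_1^*\rangle\bigr)-c^*\langle u,g^*\rangle,
\end{equation*}
which is one-homogeneous and satisfies $\Phi_{f^{k_j}}^{c^{k_j}}\to\Phi^*$ uniformly on compact sets.

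The key step is to prove $f^*\in\mathop{\rm arg\,min}\limits_{G(u)\le 1}\Phi^*(u)$. Retracing the estimates in the proof of Proposition~\ref{Jost_Setzer_Hein:prop:monotonousfalling} (convexity of $R_2,S_1$, the Euler identity and $R(f^{k+1})=\lambda^{k+1}S(f^{k+1})$) one obtains
\begin{equation*}
0\le\Phi_{f^k}^{c^k}(f^k)-\Phi_{f^k}^{c^k}(f^{k+1})\le c^k\langle f^{k+1}-f^k,g(f^k)\rangle+(\lambda^k-\lambda^{k+1})S(f^{k+1})\to 0,
\end{equation*}
the convergence being a consequence of $c^k\langle f^{k+1}-f^k,g(f^k)\rangle\to 0$ (Proposition~\ref{Jost_Setzer_Hein:prop:monotonousfalling}), $\lambda^k-\lambda^{k+1}\to 0$ and the boundedness of $S$ on $\{G\le 1\}$. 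Calling this nonnegative gap $\delta_k$, for every $u$ with $G(u)\le 1$ we have $\Phi_{f^{k_j}}^{c^{k_j}}(f^{k_j})\le\Phi_{f^{k_j}}^{c^{k_j}}(f^{k_j+1})+\delta_{k_j}\le\Phi_{f^{k_j}}^{c^{k_j}}(u)+\delta_{k_j}$, and letting $j\to\infty$ --- with $\Phi_{f^{k_j}}^{c^{k_j}}(f^{k_j})=-c^{k_j}p\to -c^*p=\Phi^*(f^*)$ (Euler identity and $R(f^*)=\lambda^*S(f^*)$) --- yields $\Phi^*(f^*)\le\Phi^*(u)$. Hence $f^*$ minimizes $\Phi^*$ over $\{G\le 1\}$.

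To conclude I would apply Lemma~\ref{Jost_Setzer_Hein:le:decomposition}, in the case $f^k\in\mathop{\rm arg\,min}\limits_{G(u)\le 1}\Phi^{c^k}_{f^k}(u)$, with $f^*$ in place of $f^k$, parameter $c^*$ and the one-homogeneous function $\Phi^*$ (its hypotheses $G(f^*)=1$, $c^*\ge 0$ hold): this gives $f^*\in\mathop{\rm arg\,min}\limits_{u}\bigl\{\Phi^*(u)+c^*G(u)\bigr\}$, an unconstrained convex problem, so $0\in\partial\bigl(\Phi^*+c^*G\bigr)(f^*)$. By the sum rule this reads $0\in\partial R_1(f^*)-r_2^*+\lambda^*\partial S_2(f^*)-\lambda^*s_1^*-c^*g^*+c^*\partial G(f^*)$, and regrouping via $\partial R_1(f^*)+c^*\partial G(f^*)=\partial(R_1+c^*G)(f^*)$, $r_2^*+c^*g^*\in\partial(R_2+c^*G)(f^*)$ and $\lambda^*\partial S_2(f^*)-\lambda^*s_1^*\subseteq-\lambda^*\bigl(\partial S_1(f^*)-\partial S_2(f^*)\bigr)$ gives exactly the asserted inclusion with this $c^*$. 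For the second statement: if $c^k=0$ for all $k$ then $c^*=0$ and the inclusion is \eqref{Jost_Setzer_Hein:eq:nleigenvector}; if $\partial G(f^*)=\{g^*\}$, then $\partial(R_i+c^*G)(f^*)=\partial R_i(f^*)+c^*g^*$ for $i=1,2$, so the $c^*g^*$ terms cancel and \eqref{Jost_Setzer_Hein:eq:nleigenvector} follows. (If the algorithm terminates at some $f^k$, apply the same reasoning directly at $f^*=f^k$, an exact minimizer.)

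The main obstacle is the middle step: promoting $f^*$ itself to a minimizer of $\Phi^*$ using the vanishing optimality gap $\delta_k\to 0$, rather than settling for accumulation points of $(f^{k_j+1})$. The rest is routine but must be done carefully --- extracting convergent subsequences of the subgradients, of $\lambda^k$ and of $c^k$ via local boundedness and closedness of the convex subdifferentials --- and two points deserve honesty: the identification $\lambda^*=R(f^*)/S(f^*)$ presupposes $S(f^*)\neq 0$, and one uses that $(c^k)$ is bounded.
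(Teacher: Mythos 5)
Your argument is correct and reaches the paper's conclusion through the same overall skeleton --- monotone convergence of $F(f^k)$, compactness and a convergent subsequence, Lemma~\ref{Jost_Setzer_Hein:le:decomposition} to pass to the unconstrained problem, then first-order conditions --- but the crucial middle step (showing that $f^*$ itself minimizes an inner objective anchored at $f^*$) is handled genuinely differently. The paper argues by contradiction: if $f^*$ were not a minimizer of $\Phi^{c}_{f^*}$ over $\{G\le 1\}$ for any $c$, then by Proposition~\ref{Jost_Setzer_Hein:prop:monotonousfalling} one further step from $f^*$ would produce a point with $F<\lambda^*$, which is declared to contradict $F(f^k)\downarrow\lambda^*$; since $f^*$ is only a cluster point and not an iterate, this implicitly relies on an unstated stability argument transferring the strict descent at $f^*$ to the nearby iterates $f^{k_j}$. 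You instead quantify the optimality gap $\delta_k=\Phi^{c^k}_{f^k}(f^k)-\Phi^{c^k}_{f^k}(f^{k+1})\ge 0$, show $\delta_k\to 0$ using exactly the two convergence facts of Proposition~\ref{Jost_Setzer_Hein:prop:monotonousfalling}, and pass to the limit in $\Phi^{c^{k_j}}_{f^{k_j}}(f^{k_j})\le\Phi^{c^{k_j}}_{f^{k_j}}(u)+\delta_{k_j}$ after extracting convergent subgradients via local boundedness and closedness of the convex subdifferentials. This is more explicit and fills the continuity gap, at the price of assuming $(c^k)$ bounded --- an assumption the paper's route avoids because its $c^*$ is any value for which $f^*$ happens to be a minimizer rather than a subsequential limit of the $c^{k_j}$ (though boundedness holds in all the special cases the paper considers, e.g.\ $c^k=c\lambda^k$). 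Your two flagged caveats are fair: $S(f^*)\neq 0$ is tacitly assumed by the paper as well (Lemma~\ref{Jost_Setzer_Hein:le:feasible2} only controls $S$ along the sequence, not in the limit), and the boundedness of $(c^k)$ is the one place where your argument is strictly less general.
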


\begin{proof}\smartqed
By Proposition \ref{Jost_Setzer_Hein:prop:monotonousfalling} the sequence $F(f^k)$ is monotonically decreasing. 
By assumption $S=S_1-S_2$ and $R=R_1-R_2$ are nonnegative and hence $F$ is bounded below by zero. 
Thus we have convergence towards a limit
\[
	\lambda^* = \lim_{k \rightarrow \infty} F(f^k) \ .
\]
Note that $f^k$ is contained in a compact set, which implies that there exists a subsequence $f^{k_j}$ converging to some element $f^*$. As the sequence $F(f^{k_j})$ is a subsequence of a convergent sequence, it has to converge towards the same limit, hence also
\[
	\lim_{j \rightarrow \infty} F(f^{k_j}) = \lambda^* \ .
\]
Assume now that for all $c$ $\mathop{\rm min}\nolimits_{G(u)\le1} \Phi^c_{f^*}(u)<\Phi^c_{f^*}(f^*)$.
Then by Proposition \ref{Jost_Setzer_Hein:prop:monotonousfalling}, any vector $f^{(c)}  \in \mathop{\rm arg\,min}\limits_{G(u)\le1} \Phi^c_{f^*}(u)$ satisfies
	\[
		F(f^{(c)}) < \lambda^* = F(f^*)\ ,
		\]
which is a contradiction to the fact that the sequence $F(f^k)$ has converged to $\lambda^*$.
Thus there exists $c^*$ such that  
$f^*\in\mathop{\rm arg\,min}\limits_{G(u)\le1}\left\{\Phi_{f^*}^{c^*}(u)\right\}$
and by Lemma \ref{Jost_Setzer_Hein:le:decomposition} then $f^*\in\mathop{\rm arg\,min}\limits_u\left\{\Phi_{f^*}^{c^*}(u)+c^*G(u)\right\}$
and we get
\begin{equation*}
\begin{aligned}
0&\in \partial R_1(f^*)-r_2(f^*)+\lambda^*\left(\partial S_2(f^*)-s_1(f^*)\right)-c^*g(f^*)+c^*\partial G(f^*).
\end{aligned}
\end{equation*}
If $c^k=0$ for all $k$ then we only need to look at $c^*=0$.
In this case or if we get from $G(f^*)=1$ that $\partial G(f^*)=\{g(f^*)\}$ it follows that
\begin{equation*}
0\in\partial R_1(f^*)-r_2(f^*)+\lambda^*\left(\partial S_2(f^*)-s_1(f^*)\right)
\end{equation*}
which then implies that $f^*$ is an eigenvector of $F$ with eigenvalue $\lambda^*$. 
\qed\end{proof}
\begin{remark}
 \eqref{Jost_Setzer_Hein:eq:nleigenvector} is a necessary condition for $f^*$ being a critical point of $F$. If $R_2, S_1$ are continuously differentiable at $f^*$, it is also sufficient.
 The necessity of \eqref{Jost_Setzer_Hein:eq:nleigenvector} follows from \cite[Proposition 2.3.14]{Jost_Setzer_Hein:Cla83}.
 If $R_2,S_1$ are continuously differentiable at $f^*$ then we get from \cite[Propositions 2.3.6 and 2.3.14]{Jost_Setzer_Hein:Cla83} that $0\in\partial F(f^*)$ and $f^*$ is a critical point of $F$.
\end{remark}


\section{The RatioDCA-prox for Ratios of Lovasz Extensions - Application to Balanced Graph Cuts}
\label{Jost_Setzer_Hein:sec:algforcuts}
\index{graph cuts!balanced}
A large class of combinatorial problems \cite{Jost_Setzer_Hein:HeiSet2011,Jost_Setzer_Hein:BueEtAl2013} allows for an exact continuous relaxation which
results in a minimization problem of a non-negative ratio of Lovasz extensions as introduced in Section \ref{Jost_Setzer_Hein:sec:intro}.
In this paper, we restrict ourselves to balanced graph cuts even though most statements can be immediately generalized to
the class of problems considered in \cite{Jost_Setzer_Hein:BueEtAl2013}.

We first collect some important properties of Lovasz extensions before we prove stronger results for the RatioDCA-prox
when applied to minimize a non-negative ratio of Lovasz extensions.

\subsection{Properties of the Lovasz extension}
\index{Lovasz extension}
The following lemma is a reformulation of \cite[Proposition 4.2(c)]{Jost_Setzer_Hein:Bac2013} for our purposes:

\begin{lemma}\label{Jost_Setzer_Hein:le:lovaszindicators}
Let $\hat S$ be a submodular function with $\hat S(\emptyset)=\hat{S}(V)=0$. If $S$ is the Lovasz extension of $\hat S$ then
\begin{equation*}
\left\langle{\partial S(f),\mathbf{1}_{C_i}}\right\rangle=S(\mathbf{1}_{C_i})=\hat S(C_i)
\end{equation*}
for all sets $C_i=\{j\in V|f_j>f_i\}$.
\end{lemma}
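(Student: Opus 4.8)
The plan is to use the well-known variational (greedy) characterization of the subdifferential of a Lovasz extension, due to Lovasz, and exploited in Bach's monograph \cite{Jost_Setzer_Hein:Bac2013}: for a submodular $\hat S$ with $\hat S(\emptyset)=0$, the subgradient at $f$ obtained by the greedy algorithm with respect to the ordering $f_1\le\dots\le f_n$ is exactly the vector $s$ with $s_i=\hat S(C_{i-1})-\hat S(C_i)$ (using the notation $C_i=\{j\mid f_j>f_i\}$ of Definition \ref{Jost_Setzer_Hein:def:Lovasz}), and more generally $\partial S(f)$ is the face of the base polytope $B(\hat S)$ exposed by $f$. The two key facts I would invoke are: (i) every $s\in\partial S(f)$ satisfies $\langle s,\mathbf 1_V\rangle=\hat S(V)$, and (ii) every $s\in\partial S(f)$ satisfies $\langle s,\mathbf 1_C\rangle\le \hat S(C)$ for all $C\subset V$, with equality precisely on the sets $C$ that are "tight" for the exposed face, which includes all the level sets $C_i=\{j\mid f_j>f_i\}$.

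Concretely, first I would recall that $S(\mathbf 1_{C_i})=\hat S(C_i)$ by the last line of Definition \ref{Jost_Setzer_Hein:def:Lovasz}, so the second equality in the claim is immediate and only the first equality $\langle \partial S(f),\mathbf 1_{C_i}\rangle=\hat S(C_i)$ needs work. Next, I would fix any $s\in\partial S(f)$. The subgradient inequality for the convex one-homogeneous function $S$ gives $S(h)\ge S(f)+\langle h-f,s\rangle=\langle h,s\rangle$ for all $h$; applied with $h=\mathbf 1_{C_i}$ this yields $\langle s,\mathbf 1_{C_i}\rangle\le S(\mathbf 1_{C_i})=\hat S(C_i)$. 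The reverse inequality is where the hypothesis $\hat S(V)=0$ enters together with submodularity: applying the same subgradient inequality to $h=\mathbf 1_{V\setminus C_i}=\mathbf 1_V-\mathbf 1_{C_i}$ gives $\langle s,\mathbf 1_V\rangle-\langle s,\mathbf 1_{C_i}\rangle\le \hat S(V\setminus C_i)$; since $\langle s,\mathbf 1_V\rangle=n\cdot 0$? — no: here I must use that $\langle s,\mathbf 1_V\rangle=\hat S(V)=0$, which holds because by the Euler identity / the base-polytope description any subgradient of the Lovasz extension sums to $\hat S(V)$, and for a symmetric $\hat S$ with $\hat S(\emptyset)=0$ we have $\hat S(V)=\hat S(\emptyset)=0$ (this is exactly the role of the extra hypothesis $\hat S(V)=0$ in the lemma). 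Hence $-\langle s,\mathbf 1_{C_i}\rangle\le \hat S(V\setminus C_i)$, i.e. $\langle s,\mathbf 1_{C_i}\rangle\ge -\hat S(V\setminus C_i)=-\hat S(C_i)$ by symmetry — that gives the wrong sign, so instead I would use the stronger structural fact (ii): the level sets $C_i$ of $f$ are exactly the sets on which every subgradient in the exposed face $\partial S(f)$ is tight, which is Proposition 4.2(c) of \cite{Jost_Setzer_Hein:Bac2013}, and conclude $\langle s,\mathbf 1_{C_i}\rangle=\hat S(C_i)$ directly.

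The main obstacle, and the point that must be handled carefully, is establishing the equality (tightness) rather than just the inequality $\langle s,\mathbf 1_{C_i}\rangle\le\hat S(C_i)$ for an arbitrary $s\in\partial S(f)$; the inequality is trivial from convexity, but equality is a genuine property of the specific face of $B(\hat S)$ exposed by $f$. The cleanest route is to cite \cite[Proposition 4.2(c)]{Jost_Setzer_Hein:Bac2013} essentially verbatim — the lemma is explicitly stated as "a reformulation" of it — and the only extra argument needed is the bookkeeping that translates Bach's statement (in terms of sup-level sets $\{f\ge t\}$ and the chain they form) into the statement in terms of the sets $C_i=\{j\mid f_j>f_i\}$, using that the hypotheses $\hat S(\emptyset)=\hat S(V)=0$ make the endpoints of the chain contribute nothing, so that the tight sets are precisely the $C_i$. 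I would therefore keep the proof short: note $S(\mathbf 1_{C_i})=\hat S(C_i)$ from the definition, observe $\langle s,\mathbf 1_{C_i}\rangle\le S(\mathbf 1_{C_i})$ from the subgradient inequality for every $s\in\partial S(f)$, and then invoke \cite[Proposition 4.2(c)]{Jost_Setzer_Hein:Bac2013} to upgrade this to equality for the level sets $C_i$.
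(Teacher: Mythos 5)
Your reduction is sound as far as it goes: the second equality $S(\mathbf{1}_{C_i})=\hat S(C_i)$ is immediate from the definition, the subgradient inequality gives $\langle s,\mathbf{1}_{C_i}\rangle\le S(\mathbf{1}_{C_i})=\hat S(C_i)$ for every $s\in\partial S(f)$, and you correctly identify that the whole content of the lemma is the reverse inequality. You are also right that your attempted elementary argument via $h=\mathbf{1}_V-\mathbf{1}_{C_i}$ produces the wrong sign and cannot be repaired. Where you and the paper part ways is in how tightness is obtained: you close the gap by citing \cite[Proposition 4.2(c)]{Jost_Setzer_Hein:Bac2013} essentially verbatim, which is defensible since the paper itself introduces the lemma as a reformulation of that proposition, but it leaves the one nontrivial step unproved. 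The paper instead gives a short self-contained argument whose key ingredient you did not find: the generalized Euler identity $\langle s,f\rangle=S(f)$ for any subgradient of a one-homogeneous convex function, combined with the chain decomposition $f=\sum_{i=1}^{n-1}(f_{i+1}-f_i)\mathbf{1}_{C_i}+f_1\mathbf{1}_V$ after sorting $f$ increasingly. This yields
\begin{equation*}
\sum_{i=1}^{n-1}\hat S(C_i)(f_{i+1}-f_i)\;=\;S(f)\;=\;\langle s,f\rangle\;=\;\sum_{i=1}^{n-1}\langle s,\mathbf{1}_{C_i}\rangle(f_{i+1}-f_i),
\end{equation*}
where the $\mathbf{1}_V$ terms vanish because $\hat S(V)=0$ (and $\langle s,\pm\mathbf{1}_V\rangle\le S(\pm\mathbf{1}_V)=0$ forces $\langle s,\mathbf{1}_V\rangle=0$). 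Since by your own inequality each summand on the right is bounded above by the corresponding summand on the left, and all weights $f_{i+1}-f_i$ are nonnegative, equality must hold termwise whenever $f_{i+1}>f_i$; this covers every distinct level set $C_i=\{j\mid f_j>f_i\}$, the remaining indices giving either a repeated set or the empty set. So the Euler identity plus the telescoping decomposition is exactly what upgrades your one-sided inequality to the claimed equality without any appeal to the facial structure of the base polytope.
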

\begin{proof}\smartqed
Let wlog $f$ be in increasing order $f_1\le f_2\le\dots\le f_n$. With $f=\sum_{i=1}^{n-1}\mathbf{1}_{C_i}(f_{i+1}-f_i)+\mathbf{1}_V\cdot f_1$ we get
\begin{equation*}
\begin{aligned}
\sum_{i=1}^n\hat S(C_i)(f_{i+1}-f_i)&=S(f)=\left\langle{\partial S(f),f}\right\rangle=\sum_{i=1}^{n-1}\left\langle{\partial S(f),\mathbf{1}_{C_i}}\right\rangle(f_{i+1}-f_i).
\end{aligned}
\end{equation*}
Since $\hat S$ is submodular $S$ is convex and thus $\left\langle{\partial S(f),\mathbf{1}_{C_i}}\right\rangle\le S(\mathbf{1}_{C_i})=\hat S(C_i)$, but because $f_{i+1}-f_i\ge0$ this holds with equality in all cases.
\qed\end{proof}
More generally this also holds if $\hat S$ is not submodular:

\begin{lemma}\label{Jost_Setzer_Hein:le:lovaszindicators2}
Let $\hat S$ be a set function with $\hat S(\emptyset)=\hat{S}(V)=0$. If $S$ is the Lovasz extension of $\hat S$ then
\begin{equation*}
\left\langle{\partial S(f),\mathbf{1}_{C_i}}\right\rangle=\hat S(C_i)
\end{equation*}
for all sets $C_i=\{j\in V|f_j>f_i\}$. 
\end{lemma}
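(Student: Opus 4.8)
The plan is to reduce the general, possibly non-submodular, case to the submodular situation already handled in Lemma~\ref{Jost_Setzer_Hein:le:lovaszindicators}. Two facts make this work: the Lov\'asz extension depends \emph{linearly} on the underlying set function, and the sets $C_i=\{j\in V\,|\,f_j>f_i\}$ depend only on $f$, not on $\hat S$. So I would first write $\hat S$ as a difference of two submodular functions (suitably normalized), pass to the corresponding difference of convex Lov\'asz extensions, and then split an arbitrary generalized gradient of $S$ accordingly.

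\emph{Step 1 (difference-of-submodular decomposition).} Let $\hat R(A)=\mathrm{cut}(A,\overline A)=|A|\,|\overline A|$, the cut function of the complete graph with unit weights; it is submodular, vanishes at $\emptyset$ and $V$, and a short computation shows its submodularity defect on a pair $A,B$ equals $2(|A|-|A\cap B|)(|B|-|A\cap B|)$, which is $0$ for nested pairs and a positive integer (hence $\ge 1$, times $2$) otherwise. Since the defect of $\hat S$ also vanishes on nested pairs and there are only finitely many pairs, for $M$ large enough $\hat S_1:=\hat S+M\hat R$ is submodular; setting $\hat S_2:=M\hat R$ we get $\hat S=\hat S_1-\hat S_2$ with $\hat S_1,\hat S_2$ submodular and $\hat S_1(\emptyset)=\hat S_2(\emptyset)=\hat S_1(V)=\hat S_2(V)=0$. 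By linearity of the Lov\'asz extension the associated extensions satisfy $S=S_1-S_2$, and $S_1,S_2$ are convex by \cite{Jost_Setzer_Hein:Bac2013}.

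\emph{Step 2 (splitting the generalized gradient).} Both $S_1$ and $S_2$ are convex, hence locally Lipschitz, so the sum rule for Clarke's generalized gradient \cite{Jost_Setzer_Hein:Cla83} gives $\partial S(f)=\partial(S_1-S_2)(f)\subseteq\partial S_1(f)-\partial S_2(f)$; thus every $v\in\partial S(f)$ is of the form $v=v_1-v_2$ with $v_j\in\partial S_j(f)$. The sets $C_i$ are literally the same for $S$, $S_1$ and $S_2$, so Lemma~\ref{Jost_Setzer_Hein:le:lovaszindicators} applies to $\hat S_1$ and to $\hat S_2$ (this is exactly why the normalization $\hat S_j(\emptyset)=\hat S_j(V)=0$ from Step~1 was needed) and yields $\langle v_j,\mathbf{1}_{C_i}\rangle=\hat S_j(C_i)$ for $j=1,2$. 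Subtracting, $\langle v,\mathbf{1}_{C_i}\rangle=\hat S_1(C_i)-\hat S_2(C_i)=\hat S(C_i)$, and since $v\in\partial S(f)$ was arbitrary this is the claim.

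The only genuinely delicate point is matching the decomposition with the hypotheses of Lemma~\ref{Jost_Setzer_Hein:le:lovaszindicators}, i.e.\ securing $\hat S_j(\emptyset)=\hat S_j(V)=0$; this is what forces a concrete choice like $\hat R$ (any function vanishing at $\emptyset$ and $V$ with strictly positive submodularity defect on non-nested pairs would serve) rather than an arbitrary difference-of-submodular representation. As an alternative one can argue directly: the Lov\'asz extension is piecewise linear, its Clarke gradient at $f$ is the convex hull of the gradients $g^{\sigma}$ of the linear pieces $P_\sigma$ whose closed cone contains $f$, and for each such ordering $\sigma$ the identity $\langle g^{\sigma},\mathbf{1}_{C_i}\rangle=\hat S(C_i)$ follows from a telescoping sum using only $\hat S(\emptyset)=0$; taking convex combinations gives the lemma. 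I would nonetheless present the difference-of-submodular argument as the main proof, since Lemma~\ref{Jost_Setzer_Hein:le:lovaszindicators} is already at hand and the calculus step is one line.
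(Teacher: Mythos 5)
Your proof is correct and follows essentially the same route as the paper: write $\hat S$ as a difference of submodular set functions vanishing at $\emptyset$ and $V$, use $\partial(S_1-S_2)(f)\subseteq\partial S_1(f)-\partial S_2(f)$ from Clarke's calculus, and apply Lemma~\ref{Jost_Setzer_Hein:le:lovaszindicators} to each part. The only difference is that you explicitly construct the normalized decomposition $\hat S_1=\hat S+M\hat R$, $\hat S_2=M\hat R$ (a point the paper merely asserts), which is a welcome addition but not a different argument.
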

\begin{proof}\smartqed
$\hat S$ can be written as the difference of two submodular set functions $\hat S=\hat S_1-\hat S_2$ and the Lovasz extension $S$ of $\hat S$ is the difference of the corresponding Lovasz extensions $S_1$ and $S_2$.
We get $\partial S(f)\subseteq\partial S_1(f)-\partial S_2(f)$ \cite[Propositions 2.3.1 and 2.3.3]{Jost_Setzer_Hein:Cla83} and both $S_1$ and $S_2$ fulfill the conditions of Lemma \ref{Jost_Setzer_Hein:le:lovaszindicators}. Thus
\begin{equation*}
\begin{aligned}
\left\langle{\partial S(f),\mathbf{1}_{C_i}}\right\rangle&\subseteq\left\langle{\partial S_1(f)-\partial S_2(f),\mathbf{1}_{C_i}}\right\rangle=\left\langle{\partial S_1(f),\mathbf{1}_{C_i}}\right\rangle-\left\langle{\partial S_2(f),\mathbf{1}_{C_i}}\right\rangle\\
&=S_1(\mathbf{1}_{C_i})-S_2(\mathbf{1}_{C_i})=S(\mathbf{1}_{C_i})
\end{aligned}
\end{equation*}
and the claim follows since $\partial S(f)$ is nonempty \cite[Proposition 2.1.2]{Jost_Setzer_Hein:Cla83}.
\qed\end{proof}
Also Lovasz extensions are maximal in the considered class of functions:

\begin{lemma}
Let $\hat S$ be a symmetric set function with $\hat{S}(\emptyset)=0$, $S_L$ its Lovasz extension and $S$ any extension fulfilling the properties of Theorem \ref{Jost_Setzer_Hein:th:sets}, that is
$S$ is one-homogeneous, even, convex and $S(f+\alpha \mathbf{1})=S(f)$ for all $f \in \mathbb{R}^V$, $\alpha \in \mathbb{R}$ 
      and $\hat{S}(A):=S(\mathbf{1}_A)$ for all $A \subset V$.
Then $S_L(f)\ge S(f)$ for all $f\in\mathbb{R}^V$.
\end{lemma}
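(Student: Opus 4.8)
The plan is to use that a convex, positively one-homogeneous function is subadditive, combined with the ``layer-cake'' decomposition of $f$ into characteristic functions that is built into the Lovasz extension. The inequality will then drop out of a single application of subadditivity.

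First I would fix an ordering of $V$ with $f_1\le f_2\le\dots\le f_n$ and set $C_i=\{j\in V\,|\,j>i\}$ as in Definition~\ref{Jost_Setzer_Hein:def:Lovasz}. A coordinatewise computation shows that
\[
f-f_1\mathbf{1}=\sum_{i=1}^{n-1}\mathbf{1}_{C_i}(f_{i+1}-f_i),
\]
since the $j$-th coordinate of the right-hand side telescopes to $f_j-f_1$. Because $\hat{S}$ is symmetric, $\hat{S}(V)=\hat{S}(\emptyset)=0$, so the Lovasz extension reduces to $S_L(f)=\sum_{i=1}^{n-1}\hat{S}(C_i)(f_{i+1}-f_i)$.

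Next I would exploit the assumed properties of $S$. Translation invariance gives $S(f)=S(f-f_1\mathbf{1})$, and convexity together with positive one-homogeneity gives subadditivity: writing $a+b=\tfrac12(2a)+\tfrac12(2b)$ and using convexity and $1$-homogeneity with the nonnegative factor $2$ yields $S(a+b)\le S(a)+S(b)$, hence $S(\sum_i a_i)\le\sum_i S(a_i)$ by induction. Applying this to the decomposition above, and using $1$-homogeneity with the nonnegative increments $f_{i+1}-f_i\ge0$ together with $\hat{S}(C_i)=S(\mathbf{1}_{C_i})$, I get
\begin{align*}
S(f)=S\Big(\sum_{i=1}^{n-1}\mathbf{1}_{C_i}(f_{i+1}-f_i)\Big)&\le\sum_{i=1}^{n-1}(f_{i+1}-f_i)\,S(\mathbf{1}_{C_i})\\
&=\sum_{i=1}^{n-1}(f_{i+1}-f_i)\,\hat{S}(C_i)=S_L(f),
\end{align*}
which is the claim.

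There is no serious obstacle here; the only points requiring a little care are verifying the telescoping identity for $f-f_1\mathbf{1}$ and being explicit that subadditivity of $S$ follows from convexity plus positive homogeneity. Note that the ``even'' hypothesis of Theorem~\ref{Jost_Setzer_Hein:th:sets} is not used for this direction (it enters only the minimum-type statements), and translation invariance is used precisely to discard the $f_1\mathbf{1}$ term, which on the Lovasz side already vanishes by symmetry of $\hat{S}$.
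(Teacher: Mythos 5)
Your proof is correct. It shares the paper's skeleton -- the telescoping decomposition $f-f_1\mathbf{1}=\sum_{i=1}^{n-1}\mathbf{1}_{C_i}(f_{i+1}-f_i)$, the vanishing of the $\hat S(V)$ term by symmetry, and the identity $\hat S(C_i)=S(\mathbf{1}_{C_i})$ -- but the convexity step is handled differently. The paper picks a subgradient $s(f)\in\partial S(f)$ and chains $S(\mathbf{1}_{C_i})\ge\langle s(f),\mathbf{1}_{C_i}\rangle$ with linearity, $\langle s(f),\mathbf{1}\rangle=0$, and the Euler identity $\langle s(f),f\rangle=S(f)$ (it leans on Lemma~\ref{Jost_Setzer_Hein:le:lovaszindicators2} and Clarke subdifferential facts for this). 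You instead derive subadditivity of $S$ directly from convexity plus positive one-homogeneity and apply it once to the decomposition, using translation invariance to discard $f_1\mathbf{1}$. The two arguments are essentially dual formulations of the same inequality, but yours is more elementary: it needs no subdifferentials, no Euler identity, and no appeal to the earlier lemma, at the small cost of an explicit induction for $S(\sum_i a_i)\le\sum_i S(a_i)$. Your side remarks are also accurate: evenness of $S$ is indeed not needed for this inequality, and the telescoping identity is the only point that genuinely requires checking.
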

\begin{proof}\smartqed
By Lemma \ref{Jost_Setzer_Hein:le:lovaszindicators2} and using the convexity and one-homogeneity of $S$ we get
\begin{equation*}
\begin{aligned}
S_L(f)&=\sum_{i=1}^{n-1}\hat S(C_i)(f_{i+1}-f_i)=\sum_{i=1}^{n-1}S(\mathbf{1}_{C_i})(f_{i+1}-f_i)\\
&\ge\sum_{i=1}^{n-1}\left\langle{\partial S(f),\mathbf{1}_{C_i}}\right\rangle(f_{i+1}-f_i)=\left\langle{\partial S(f),f}\right\rangle=S(f)
\end{aligned}
\end{equation*}
\qed\end{proof}
\begin{remark}
By \cite[Lemma 3.1]{Jost_Setzer_Hein:HeiSet2011} any function $S$ fulfilling the properties of the lemma can be rewritten by $S(f)=\sup_{u\in U}\left\langle u,f\right\rangle$ 
where $U\subset\mathbb{R}^n$ is a closed symmetric convex set and $\left\langle u,\mathbf{1}\right\rangle=0$ for all $u\in U$.
The previous lemma implies that for a given set function $\hat{S}(C)$ the set $U$ is maximal for the Lovasz extension $S_L$. In turn this implies that the subdifferential
of $S_L$ is maximal everywhere and thus should be used in the RatioDCA-prox. In \cite{Jost_Setzer_Hein:HeiSet2011, Jost_Setzer_Hein:BLUB12} the authors use for the balancing function $\hat{S}(C)=|C||\overline{C}|$ instead
of the Lovasz extension $S_L(f)=\frac{1}{2}\sum_{i,j=1}^n |f_i-f_j|$ the convex function $S(f)=\left\|{f-\mathop{\rm mean}\nolimits{f}\mathbf{1}}_1\right\|$ which fulfills the properties of the previous lemma.
In Section \ref{Jost_Setzer_Hein:sec:experiments} we show that using the Lovasz extension leads almost always to better balanced graph cuts.
\end{remark}

\subsection{The RatioDCA-prox for balanced graph cuts}
\index{RatioDCA-prox,graph cuts!balanced}
Applied to balanced graph cuts we can show the following ``improvement theorem'' generalizing the result of \cite{Jost_Setzer_Hein:HeiSet2011} for our algorithm. It implies that we can use the result of any other graph partitioning method as initialization and in particular, we can
always improve the result of spectral clustering.
\begin{theorem}\label{Jost_Setzer_Hein:th:improve}
Let $(A,\overline{A})$ be a given partition of $V$ and let $S:V \rightarrow \mathbb{R}_+$ satisfy one of the 
conditions stated in Theorem \ref{Jost_Setzer_Hein:th:sets}. If one uses as initialization of RatioDCA-prox $f^0=\mathbf{1}_A$, 
then either the algorithm terminates after one step or it yields an $f^1$ which after optimal thresholding as in Theorem \ref{Jost_Setzer_Hein:th:sets} gives a partition $(B,\overline{B})$ 
which satisifies
\[ \frac{\mathrm{cut}(B,\overline{B})}{\hat{S}(B)} < \frac{\mathrm{cut}(A,\overline{A})}{\hat{S}(A)}.\]
\end{theorem}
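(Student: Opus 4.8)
The plan is to assemble three ingredients already available: the value of $F$ at the indicator vector, the strict descent of $F(f^k)$ from Proposition~\ref{Jost_Setzer_Hein:prop:monotonousfalling}, and the thresholding bound of Theorem~\ref{Jost_Setzer_Hein:th:sets}.

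First I would evaluate $F(f^0)$. Since the algorithm requires $G(f^0)=1$, one actually runs it with $f^0=\mathbf 1_A/G(\mathbf 1_A)^{1/p}$; as $R$ and $S$ are one-homogeneous, $F$ is $0$-homogeneous and this rescaling affects none of the relevant quantities. Now $R_2\equiv 0$ and $R_1$ is the Lovasz extension of $\hat R(A)=\mathrm{cut}(A,\overline A)$, so $R(\mathbf 1_A)=\mathrm{cut}(A,\overline A)$ by the indicator identity in Definition~\ref{Jost_Setzer_Hein:def:Lovasz}; under either condition of Theorem~\ref{Jost_Setzer_Hein:th:sets} one likewise has $S(\mathbf 1_A)=\hat S(A)$. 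Hence $F(f^0)=\mathrm{cut}(A,\overline A)/\hat S(A)$, using the standing assumption $\hat S(A)\neq 0$ (without which the right-hand side of the claimed inequality is not defined; note also $A\neq\emptyset,V$, since symmetry together with $\hat S(\emptyset)=0$ forces $\hat S(V)=0$).

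Next I distinguish the two cases. If $f^0$ already minimizes the inner objective $\Phi^{c^0}_{f^0}$, the algorithm terminates after one step and we are in the first alternative. Otherwise any $f^1\in\mathop{\rm arg\,min}\limits_{G(u)\le 1}\Phi^{c^0}_{f^0}(u)$ satisfies $\Phi^{c^0}_{f^0}(f^1)<\Phi^{c^0}_{f^0}(f^0)$, so Proposition~\ref{Jost_Setzer_Hein:prop:monotonousfalling} gives $F(f^1)<F(f^0)$. Since $S(f^0)=\hat S(A)\neq 0$, Lemma~\ref{Jost_Setzer_Hein:le:feasible2} yields $S(f^1)\neq 0$, so $F(f^1)$ is finite and Theorem~\ref{Jost_Setzer_Hein:th:sets} applies to $f^1$: there is a threshold $t$ for which $B:=\{i\in V:f^1_i>t\}$ obeys $\mathrm{cut}(B,\overline B)/\hat S(B)\le F(f^1)$, and finiteness of $F(f^1)$ also forces $\hat S(B)\neq 0$ at the optimal $t$, so $B$ is a genuine partition.

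Chaining the inequalities then closes the proof:
\[ \frac{\mathrm{cut}(B,\overline B)}{\hat S(B)}\ \le\ F(f^1)\ <\ F(f^0)\ =\ \frac{\mathrm{cut}(A,\overline A)}{\hat S(A)}, \]
which is precisely the asserted strict improvement (and in particular forces $B\neq A$, and by symmetry $B\neq\overline A$). The argument is essentially bookkeeping on earlier results; the only points needing care are the rescaling of the initialization to meet $G(f^0)=1$ via $0$-homogeneity of $F$, and invoking Lemma~\ref{Jost_Setzer_Hein:le:feasible2} to keep $S(f^1)\neq 0$ so that the thresholding step of Theorem~\ref{Jost_Setzer_Hein:th:sets} is legitimate. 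I do not expect any substantial obstacle beyond this.
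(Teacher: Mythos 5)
Your proposal is correct and follows essentially the same route as the paper, which simply cites Proposition~\ref{Jost_Setzer_Hein:prop:monotonousfalling} (strict descent of $F(f^k)$ unless the algorithm terminates) combined with the optimal-thresholding inequality of Theorem~\ref{Jost_Setzer_Hein:th:sets}, exactly the chain $\mathrm{cut}(B,\overline B)/\hat S(B)\le F(f^1)<F(f^0)=\mathrm{cut}(A,\overline A)/\hat S(A)$ that you write down. Your additional bookkeeping (rescaling $\mathbf 1_A$ to satisfy $G(f^0)=1$ via the $0$-homogeneity of $F$, and invoking Lemma~\ref{Jost_Setzer_Hein:le:feasible2} to ensure $S(f^1)\neq 0$) fills in details the paper leaves implicit and is accurate.
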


\begin{proof}\smartqed
This follows in the same way from Proposition \ref{Jost_Setzer_Hein:prop:monotonousfalling} as in \cite[Theorem 4.2]{Jost_Setzer_Hein:HeiSet2011}.
\qed\end{proof}

In the case that we have Lovasz extensions we can show that accumulation points are directly related to the optimal sets:

\begin{theorem}
If $R_2$ and $S_1$ are Lovasz-extensions of the corresponding set functions then every accumulation point $f^*$ of RatioDCA-prox with $c^k=0$ fulfills $F(f^*)=F(\mathbf{1}_{C^*})$ 
where $C^*$ is the set we get from optimal thresholding of $f^*$.
If also $R_1$ and $S_2$ are the Lovasz-extensions then $f^*=\sum_{i=1}^{m}\alpha_i\mathbf{1}_{C_i}+b\mathbf{1}_V$ with $\alpha_i>0$, $C_i=\{j \in V\,|\,f^*_j>f^*_i\}$, $b \in \mathbb{R}$,
and 
\begin{equation*}
\frac{\hat R(C_i)}{\hat S(C_i)}=\lambda^*=\frac{R(f^*)}{S(f^*)}, \quad i=1,\ldots,m.
\end{equation*}
If $\lambda^*$ is only attained for one set $C^*$ then $f^*=\mathbf{1}_{C^*}$ is the only accumulation point.
\end{theorem}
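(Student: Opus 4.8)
The plan is to leverage the fact that, for $c^k=0$, the inner objective $\Phi^0_{f^k}$ is one-homogeneous, which turns the optimality of an accumulation point into a global inequality that can be tested on indicator vectors. First I would recall, from the proof of Theorem~\ref{Jost_Setzer_Hein:th:accpoints} specialised to $c^k=0$, that every accumulation point $f^*$ minimises $\Phi^0_{f^*}$ over $\{G(u)\le1\}$ and satisfies $\lambda^*=R(f^*)/S(f^*)$. Applying the generalised Euler identity to the one-homogeneous $R_2,S_1$ gives $\Phi^0_{f^*}(f^*)=R(f^*)-\lambda^*S(f^*)=0$; since $\Phi^0_{f^*}$ is one-homogeneous, a constrained minimum of $0$ on $\{G(u)\le1\}$ forces $\Phi^0_{f^*}(u)\ge0$ for all $u\in\mathbb{R}^V$. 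Everything then follows by plugging into this inequality the characteristic vectors of the level sets of $f^*$.

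Next I would evaluate this bound at the indicators of the level sets $C_i=\{j\in V\mid f^*_j>f^*_i\}$. When $R_2$ and $S_1$ are Lovász extensions, Lemma~\ref{Jost_Setzer_Hein:le:lovaszindicators2} yields $\langle\mathbf{1}_{C_i},r_2(f^*)\rangle=\hat R_2(C_i)$ and $\langle\mathbf{1}_{C_i},s_1(f^*)\rangle=\hat S_1(C_i)$, and since $R$, $S$, $S_1$ and $R_2$ reproduce their set functions on characteristic vectors (Definition~\ref{Jost_Setzer_Hein:def:Lovasz}) we also get $R_1(\mathbf{1}_{C_i})=\hat R(C_i)+\hat R_2(C_i)$ and $S_2(\mathbf{1}_{C_i})=\hat S_1(C_i)-\hat S(C_i)$. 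Substituting, the $\hat R_2$- and $\hat S_1$-contributions cancel and one is left with
\[ \Phi^0_{f^*}(\mathbf{1}_{C_i})=\hat R(C_i)-\lambda^*\hat S(C_i)\ge0, \]
so $\hat R(C_i)/\hat S(C_i)\ge\lambda^*$ for every level set with $\hat S(C_i)>0$. The sets obtained by optimal thresholding of $f^*$ are exactly these level sets, whereas Theorem~\ref{Jost_Setzer_Hein:th:sets} guarantees that the optimal one, $C^*$, has ratio at most $R(f^*)/S(f^*)=\lambda^*$; hence $F(\mathbf{1}_{C^*})=\hat R(C^*)/\hat S(C^*)=\lambda^*=F(f^*)$, which is the first claim.

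If in addition $R_1$ and $S_2$ are Lovász extensions, then $R$ and $S$ are the Lovász extensions of $\hat R$ and $\hat S$, so ordering $f^*$ increasingly the formula in Definition~\ref{Jost_Setzer_Hein:def:Lovasz} (with $\hat R(V)=\hat S(V)=0$) gives
\[ 0=\Phi^0_{f^*}(f^*)=R(f^*)-\lambda^*S(f^*)=\sum_{i=1}^{n-1}(\hat R(C_i)-\lambda^*\hat S(C_i))(f^*_{i+1}-f^*_i). \]
Each summand is nonnegative by the inequality above and the monotonicity of $f^*$, hence every summand vanishes; in particular $\hat R(C_i)=\lambda^*\hat S(C_i)$ at every jump of $f^*$. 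Reading off the decomposition $f^*=b\mathbf{1}_V+\sum_{j=1}^m\alpha_j\mathbf{1}_{C_j}$ with $\alpha_j>0$ and the $C_j$ the distinct (strictly nested, proper, nonempty) level sets of $f^*$ gives the stated representation together with $\hat R(C_j)/\hat S(C_j)=\lambda^*$ for all $j$ (that some $\hat S(C_j)>0$ follows from $S(f^*)\ne0$, Lemma~\ref{Jost_Setzer_Hein:le:feasible2}, and all $\hat S(C_j)>0$ under the standing assumption $\hat S(C)>0$ for $\emptyset\ne C\ne V$ needed for $F$ to be well defined). Finally, if $\lambda^*$ is attained by only one set $C^*$, then all $C_j$ must equal $C^*$; since the level sets of a vector are strictly nested this forces $m=1$, so $f^*=b\mathbf{1}_V+\alpha_1\mathbf{1}_{C^*}$, which — $R$ and $S$ being one-homogeneous and invariant under adding multiples of $\mathbf{1}_V$ — defines the same point as $\mathbf{1}_{C^*}$ and, after the normalisation $G(f^*)=1$ built into the algorithm, is uniquely determined.

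The main obstacle is the computation in the second paragraph: the telescoping cancellation that collapses $\Phi^0_{f^*}(\mathbf{1}_{C_i})$ to $\hat R(C_i)-\lambda^*\hat S(C_i)$ requires invoking Lemma~\ref{Jost_Setzer_Hein:le:lovaszindicators2} and the indicator-evaluation identities separately for each of $R_1,R_2,S_1,S_2$, and being careful that the relevant functions are genuinely Lovász extensions of set functions vanishing on $\emptyset$ and $V$. The remaining delicate points — level sets with $\hat S(C_i)=0$, and the precise sense in which ``$f^*=\mathbf{1}_{C^*}$'' holds given that $F$ is invariant under positive rescaling and additive multiples of $\mathbf{1}_V$ — are bookkeeping once the nondegeneracy assumption on $\hat S$ is in force.
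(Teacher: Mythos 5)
Your proposal is correct and follows essentially the same route as the paper: both evaluate $\Phi^0_{f^*}$ at the indicator vectors of the level sets of $f^*$, use Lemma~\ref{Jost_Setzer_Hein:le:lovaszindicators2} to collapse the linear terms so that $\Phi^0_{f^*}(\mathbf{1}_{C_i})=\hat R(C_i)-\lambda^*\hat S(C_i)$, and combine the minimality of $f^*$ with the thresholding bound of Theorem~\ref{Jost_Setzer_Hein:th:sets}. The only (cosmetic) difference is that you argue directly that all summands in the telescoping identity are nonnegative and hence vanish, where the paper runs the same computation as a proof by contradiction.
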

\begin{proof}\smartqed
In the proof of Theorem \ref{Jost_Setzer_Hein:th:accpoints} it has been shown that from $f^*$ no further descent is possible.
Assume $F(f^*)>F(\mathbf{1}_{C^*})$. Then
\begin{equation*}
\begin{aligned}
\Phi^0_{f^*}(\mathbf{1}_{C^*})&=R_1(\mathbf{1}_{C^*})-\left\langle{r_2(f^*),\mathbf{1}_{C^*}}\right\rangle+\lambda^*(S_2(\mathbf{1}_{C^*})-\left\langle{s_1(f^*),\mathbf{1}_{C^*}}\right\rangle=R(\mathbf{1}_{C^*})-\lambda^*S(\mathbf{1}_{C^*})\\
&<R(\mathbf{1}_{C^*})-F(\mathbf{1}_{C^*})S(\mathbf{1}_{C^*})=0=\Phi^0_{f^*}(f^*)
\end{aligned}
\end{equation*}
which leads to a contradiction. Thus the first claim follows from Theorem \ref{Jost_Setzer_Hein:th:sets}.
If also $R_1$ and $S_2$ are the Lovasz-extensions then for $f^*=\sum_{i=1}^{n-1}\alpha_i\mathbf{1}_{C_i}+\mathbf{1}_V\cdot \mathop{\rm min}\nolimits_jf^*_j$
we get by Lemma \ref{Jost_Setzer_Hein:le:lovaszindicators} and the definition of the Lovasz extension that 
\begin{equation*}
0=\Phi^0_{f^*}(f^*)=\sum_{i=1}^n\alpha_i\Phi^0_{f^*}(\mathbf{1}_{C_i})
\end{equation*}
and if for one $\alpha_i>0$ we have $\frac{\hat R(C_i)}{\hat S(C_i)}>\lambda^*$ then $\Phi^0_{f^*}(\mathbf{1}_{C_i})>0$ and we get $\Phi^0_{f^*}(\mathbf{1}_{C^*})<0=\Phi^0_{f^*}(f^*)$ which again is a contradiction.
\qed\end{proof}
\begin{remark}
By Lemma \ref{Jost_Setzer_Hein:le:equivprox} this also holds for $c^k>0$ if $G$ is differentiable at the boundary.
\end{remark}
If we have Lovasz extensions we can also use the reduced version of the RatioDCA-prox with $c^k=0$ 
to guarantee termination. We are thus in the striking situation that in general we can guarantee stronger convergence properties
if $c^k\geq \gamma>0$ for all $k$ by Proposition \ref{Jost_Setzer_Hein:prop:conv} but an even stronger property such as finite convergence
can only be proven when $c^k=0$.
\begin{theorem}
Let $c^k=0$ and $S_1,R_2$ be Lovasz extensions in the RatioDCA-prox. Further, let $C_k^*$ be the set obtained by optimal thresholding of $f^k$.
If in step 5 of RatioDCA-prox we choose, $\lambda^k=F(\mathbf{1}_{C^*_k})$, and in step 4 choose $f^{k+1}=\mathbf{1}^*=\frac{\mathbf{1}_{C^*_k}}{G(\mathbf{1}_{C_k^*})^{\frac{1}{p}}}$ if $\mathbf{1}^*\in\mathop{\rm arg\,min}\limits_{G(u)\le1}\Phi_{f^k}^{c^k}$,
then the RatioDCA-prox terminates in finitely many steps.
\end{theorem}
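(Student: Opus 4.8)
The plan is to follow the scalar $\lambda^k=F(\mathbf{1}_{C^*_k})$ along the iteration. For $k\ge 1$ this number is a ratio $\hat R(C^*_k)/\hat S(C^*_k)$ over a nonempty proper subset $C^*_k\subsetneq V$, and there are only finitely many such ratios. Hence it suffices to show (i) $\lambda^{k+1}\le\lambda^k$ for all $k$, and (ii) that as soon as $\lambda^k$ has become constant the stopping test of step~6 is satisfied; finite termination then follows at once.

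For (i) I would evaluate the inner objective at $\mathbf{1}_{C^*_k}$. Since $S(f^k)\ne 0$ by Lemma~\ref{Jost_Setzer_Hein:le:feasible2}, Theorem~\ref{Jost_Setzer_Hein:th:sets} forces $\hat S(C^*_k)\ne 0$, so $C^*_k=\{j\in V\mid f^k_j>f^k_i\}$ for some $i$. Because $c^k=0$ and $R_2,S_1$ are Lovász extensions, Lemma~\ref{Jost_Setzer_Hein:le:lovaszindicators2} gives $\langle\mathbf{1}_{C^*_k},r_2(f^k)\rangle=R_2(\mathbf{1}_{C^*_k})$ and $\langle\mathbf{1}_{C^*_k},s_1(f^k)\rangle=S_1(\mathbf{1}_{C^*_k})$, so
\[
\Phi^{0}_{f^k}(\mathbf{1}_{C^*_k})=R(\mathbf{1}_{C^*_k})-\lambda^k S(\mathbf{1}_{C^*_k})=S(\mathbf{1}_{C^*_k})\big(F(\mathbf{1}_{C^*_k})-\lambda^k\big)\le 0,
\]
with equality for $k\ge 1$. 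By one-homogeneity the $G$-feasible point $\mathbf{1}^*=\mathbf{1}_{C^*_k}/G(\mathbf{1}_{C^*_k})^{1/p}$ also satisfies $\Phi^0_{f^k}(\mathbf{1}^*)\le 0$, so any inner minimizer $f^{k+1}$ has $\Phi^0_{f^k}(f^{k+1})\le 0$. Arguing as in the proof of Proposition~\ref{Jost_Setzer_Hein:prop:monotonousfalling} this yields $R(f^{k+1})-\lambda^k S(f^{k+1})\le\Phi^0_{f^k}(f^{k+1})\le 0$, i.e. $F(f^{k+1})\le\lambda^k$, and the thresholding part of Theorem~\ref{Jost_Setzer_Hein:th:sets} then gives $\lambda^{k+1}=F(\mathbf{1}_{C^*_{k+1}})\le F(f^{k+1})\le\lambda^k$.

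For (ii), by (i) and finiteness of the value set there is a first index $K$ with $\lambda^k=\mu:=\lambda^K$ for all $k\ge K$. For such $k$ the chain $\mu=\lambda^{k+1}\le F(f^{k+1})\le\lambda^k=\mu$ collapses, so $F(f^{k+1})=\mu$ and hence $\Phi^0_{f^k}(f^{k+1})=R(f^{k+1})-\mu S(f^{k+1})=0$. Thus the minimum of the $k$-th inner problem equals $0$, which is also attained at the feasible $\mathbf{1}^*$, so the tie-breaking prescription of step~4 forces $f^{k+1}=\mathbf{1}^*=\mathbf{1}_{C^*_k}/G(\mathbf{1}_{C^*_k})^{1/p}$. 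Since this is a positive multiple of $\mathbf{1}_{C^*_k}$ whose only superlevel set with nonzero balancing value is $C^*_k$ (using $\hat S(V)=\hat S(\emptyset)=0$ by symmetry), optimal thresholding returns $C^*_{k+1}=C^*_k$. Consequently $C^*_k$, and with it $f^k=\mathbf{1}_{C^*}/G(\mathbf{1}_{C^*})^{1/p}=:\bar f$, is constant for all $k\ge K+1$. At step $K+1$ we then have $f^{K+1}=f^{K+2}=\bar f$, and since $f^{K+2}$ was chosen in $\mathop{\rm arg\,min}\limits_{G(u)\le1}\Phi^0_{f^{K+1}}(u)=\mathop{\rm arg\,min}\limits_{G(u)\le1}\Phi^0_{f^{K+2}}(u)$, the stopping condition $f^{K+2}\in\mathop{\rm arg\,min}\limits_{G(u)\le1}\Phi^0_{f^{K+2}}(u)$ holds and RatioDCA-prox terminates.

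I expect the crux to be (ii): one has to notice that constancy of $\lambda^k$ pins the inner optimum to the value $0$, which is exactly what makes the rescaled indicator $\mathbf{1}^*$ an admissible argmin so that the step-4 rule actually produces $f^{k+1}=\mathbf{1}^*$, and then that optimal thresholding of a scaled indicator reproduces the same set, freezing the iterate and making the termination test self-fulfilling. Part (i) is essentially the monotonicity argument of Proposition~\ref{Jost_Setzer_Hein:prop:monotonousfalling}; the only genuinely new ingredient is the evaluation $\Phi^0_{f^k}(\mathbf{1}_{C^*_k})=R(\mathbf{1}_{C^*_k})-\lambda^k S(\mathbf{1}_{C^*_k})$ via Lemma~\ref{Jost_Setzer_Hein:le:lovaszindicators2}, which is the one place where the hypothesis that $R_2,S_1$ are Lovász extensions is used.
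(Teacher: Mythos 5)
Your proof is correct and follows essentially the same route as the paper: both evaluate $\Phi^0_{f^k}$ at the rescaled indicator via Lemma \ref{Jost_Setzer_Hein:le:lovaszindicators2} to show the inner minimum is nonpositive, deduce $F(\mathbf{1}_{C^*_{k+1}})\le F(f^{k+1})\le F(\mathbf{1}_{C^*_k})$, and combine the finiteness of the set of cut values with the tie-breaking rule of step 4 in the equality case. Your part (ii) merely spells out in more detail the paper's terser remark that after $f^{k+1}=\mathbf{1}^*$ the next step either strictly improves or triggers the stopping test.
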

\begin{proof}\smartqed
With $c^k=0$ and using Lemma \ref{Jost_Setzer_Hein:le:lovaszindicators2} and as $R_1,S_2$ are convex and one-homogeneous, we get 
\begin{align*}
&\;R(f^{k+1})-F(\mathbf{1}_{C^*_k})S(f^{k+1})\le\Phi^{c^k}_{f^k}(f^{k+1})\\
\le & \;\Phi^{c^k}_{f^k}(\mathbf{1}^*)=R_1(\mathbf{1}^*)-\left\langle{r_2(f^k),\mathbf{1}^*}\right\rangle + F(\mathbf{1}_{C_k^*})\big(S_2(\mathbf{1}^*)-\left\langle{s_1(f^k),\mathbf{1}^*}\right\rangle\big)\\
=&\; 
R(\mathbf{1}^*)-F(\mathbf{1}_{C_k^*})S(\mathbf{1}^*)=0
\end{align*}
and thus $F(\mathbf{1}_{C_{k+1}^*})\leq F(f^{k+1})\leq F(\mathbf{1}_{C_k^*})$ and equality in the second inequality only holds if $f^{k+1}=\mathbf{1}^*$, but then in the next step we either get strict improvement or the sequence terminates.
As there are only finitely many different cuts, RatioDCA-prox has to terminate in finitely many steps.
\qed\end{proof}
\section{Experiments}
\label{Jost_Setzer_Hein:sec:experiments}

The convex inner problem in Equation \eqref{Jost_Setzer_Hein:eq:innerprob} is solved using the primal dual hybrid gradient method (PDHG)
as in \cite{Jost_Setzer_Hein:HeiSet2011}. In the first iterations the problem is not solved to high accuracy as all results in this paper
only rely on the fact that either the algorithm terminates or
\[ \phi_{f^k}^{c^k}(f^{k+1}) < \phi_{f^k}^{c^k}(f^k).\]

\subsection{Influence of the proximal term}
First, we study the influence of different values of $c^k$ in the RatioDCA-prox algorithm.
We choose $G=\|\cdot\|_2^2$ and choose different values for $c^k$.

We compare the algorithms on the wing graph from \cite{Jost_Setzer_Hein:Wa04} ($62032$ vertices, $243088$ edges) and a graph built from
the two-moons dataset ($2000$ vertices, $33466$ edges) as described in \cite{Jost_Setzer_Hein:BueHei2009}.

\begin{table}
\caption{Displayed are the averages of all, the 10 best and the best cuts for different values of $c^k = c \lambda^k$ on wing (top) and two-moons (bottom).}
\vspace{+2mm}
\begin{tabular}{c@{$\;\;$}|@{$\;\;$}c@{$\;\;$}c@{$\;\;$}c@{$\;\;$}c@{$\;\;$}c@{$\;\;$}c@{$\;\;$}c@{$\;\;$}c@{$\;\;$}c@{$\;\;$}c@{$\;\;$}}
\label{Jost_Setzer_Hein:tab:comparingprox} 
graph $\backslash$ $c$ & 0 & 0.1 & 0.25 & 0.5 & 0.75 & 1 & 1.5 & 2 & 3 & 4\\ 
\hline
wing\\
\hline
avg& 2.6683& 2.6624& 2.6765& 2.6643& 2.6602& 2.6595& 2.6566& 2.6565& \bf{2.6548}& 2.6573\\ 
top 10 avg& 2.5554& 2.5519& 2.5625& 2.5533& 2.5549& \bf{2.5514}& 2.5523& 2.5605& 2.5555& 2.5523\\ 
best cut& 2.545& \bf{2.5439}& 2.5532& 2.5487& 2.5451& 2.5471& 2.5448& 2.5539& 2.5472& 2.5472\\ 
\hline
two-moons\\
\hline
avg& 2.4872& 2.4855& 2.5017& 2.5158& \bf{2.4569}& 2.4851& 2.4848& 2.7868& 3.028& 2.929\\ 
top 10 avg & \bf{2.448}& 2.4485& 2.4481& 2.4487& 2.4484& 2.4484& 2.4481& 2.4492& 2.4491& 2.4483\\ 
best cut & 2.4447& 2.4473& 2.4472& 2.4461& 2.4457& 2.4476& 2.4465& 2.4482& 2.4478& \bf{2.4441}\\ 
\end{tabular}
\end{table}
In Table \ref{Jost_Setzer_Hein:tab:comparingprox} we have plotted the resulting ratio cheeger cuts ($\mathrm{RCC}$)
of ten different choices of $c^k=c\cdot\lambda^k$ for RatioDCA-prox.
In all cases we use one initialization with the second eigenvector of
the standard graph Laplacian and $99$ initializations with random vectors, which are the same for all algorithms.
As one is interested in the best result and how often this can be achieved, we report the best, average and top10 performance.
For both graphs there is no clear trend that a particular choice of the proximal term improves
or worsens the results compared to $c^k=0$ which corresponds to the RatioDCA.
This confirms the reported results of \cite{Jost_Setzer_Hein:BLUB2012} where also no clear difference between $c^k=0$ and the general case has been observed.

\subsection{Comparing the Lovasz extension to other extensions}
In previous work \cite{Jost_Setzer_Hein:HeiSet2011,Jost_Setzer_Hein:BLUB12} on the ratio cut with the balancing function $\hat{S}(C)=|C||\overline{C}|$ not the Lovasz extension
$S_L(f)=\frac{1}{2}\sum_{i,j=1}^n |f_i-f_j|$ has been used but the function $S(f)=\left\|{f-\mathop{\rm mean}\nolimits{f}\mathbf{1}}_1\right\|$. As discussed in Section \ref{Jost_Setzer_Hein:sec:algforcuts},
this should lead to worse performance in the algorithm as the subdifferential of $S_L$ is maximal.
In Table \ref{Jost_Setzer_Hein:tab:lovasz} we compare both extensions with the RatioDCA-prox with $c^k=0$ and $G(u)=\left\|{u}_2\right\|$ 
on seven different graphs \cite{Jost_Setzer_Hein:Wa04}. One initialization is done with the second eigenvector of the standard graph laplacian and the same $10$ random initializations are used for both extensions.

\begin{table}
\caption{\label{Jost_Setzer_Hein:tab:lovasz}
For each graph it is shown how many times for the 11 initializations the RatioDCA-prox with the Lovasz extension performs better/equal/worse than the previously used continuous extension
and the ratio of the best solutions of Lovasz vs continuous extension is shown ($<100\%$ means that the Lovasz extension produced a better ratio cut).}
\vspace{+2mm}
\begin{tabular}{c@{$\;\;$}|@{$\;\;$}c@{$\;\;$}|@{$\;\;$}c@{$\;\;$}|@{$\;\;$}c@{$\;\;$}|@{$\;\;$}c@{$\;\;$}|@{$\;\;$}c@{$\;\;$}|@{$\;\;$}c@{$\;\;$}|@{$\;\;$}c}
Graph&two-moons&whitaker3&uk&4elt&fe\_4elt&3elt&crack\\
\hline
better/equal/worse & 11/0/0 & 11/0/0 & 0/11/0 & 10/0/1 & 0/11/0 & 10/0/1 & 10/0/1\\
Ratio of best cuts &99.41\% &99.95\% &100\%&99.98\%&100\%&99.97\%&99.83\%
\end{tabular}
\end{table}
While the differences in the best found cut are minor, using the Lovasz extension for the balancing function leads consistently to better results.

\bibliographystyle{spphyswithtitles}
\bibliography{Jost_Setzer_Hein:regul}
\end{document}